\DeclareMathOperator*{\minimize}{minimize}
\newtheorem{problem}{Problem}
\newtheorem{lemma}{Lemma}
\title{Communication-Aware Energy Efficient Trajectory Planning with Limited Channel Knowledge}
\def\xx{\mathsf{x}}
\def\yy{\mathsf{y}}
\def\zz{\mathsf{z}}
\def\mcal{\mathcal}
\def\eu{\mathfrak}
\newcommand{\Mr}[1]{{\mathrm {#1}}}
\newcommand{\dif}[1]{\mathrm{d}{#1}}
\newcommand{\der}[2]{\frac{\dif{#1}}{\dif{#2}}}
\newcommand{\ders}[3]{\frac{\mathrm{d}^{{#3}}{#1}}{\mathrm{d}{#2}^{{#3}}}}
\newcommand{\derT}[2]{{\mathrm{d}{#1}}/{\mathrm{d}{#2}}}
\newcommand{\dersT}[3]{{\mathrm{d}^{{#3}}{#1}}/{\mathrm{d}{#2}^{{#3}}}}
\newcommand{\pT}{\derT{}{t}}
\newcommand{\psT}[1]{\dersT{}{t}{{#1}}}
\newcommand{\depT}[2]{\partial{#1}/\partial{#2}}
\newcommand{\lp}{\mathrm{s}}
\newcommand{\Expm}[1]{{\exp}\left({#1}\right)}
\newcommand{\Rss}[2]{\left\langle {#1}\, \left|\, {#2}\right. \right\rangle}
\newcommand{\im}[1]{\mathrm{ Im } \hspace{0.5ex}{#1}}
\newcommand{\Rset}{\Bbb{R}}
\newcommand{\Id}[1]{\Mr{I}_{#1}}
\newcommand{\rk}[1]{\mathrm{ rank } \hspace{0.5ex}{#1}}
\author{Daniel Bonilla Licea, M. Bonilla, Mounir Ghogho, Samson Lasaulce and Vineeth S. Varma %
\thanks{Daniel Bonilla Licea is with the International University of Rabat, Morocco, e-mail:{daniel.bonilla-licea@uir.ac.ma}.}%
\thanks{M. Bonilla is with the CINVESTAV-IPN, UMI 3575 CINVESTAV-CNRS, Mexico, e-mail:{mbonilla@cinvestav.mx}.}%
\thanks{Mounir Ghogho is with the University of Leeds, UK and the International University of Rabat, Morocco, e-mail: {m.ghogho@ieee.org}.%
}
\thanks{Samson Lasaulce is with the CNRS, L2S (CNRS-CentraleSupelec-Univ. Saclay), France, e-mail: {lasaulce@lss.supelec.fr}.%
}
\thanks{Vineeth S. Varma is with the CRAN, Universit\'e de Lorraine, France, e-mail: {vineethsvarma@gmail.com}.}
}
\begin{document}
\maketitle

\begin{abstract}
Wireless communications is nowadays an important aspect of robotics. There are many applications in which a robot must move to a certain goal point while transmitting information through a wireless channel which depends on the particular trajectory chosen by the robot to reach the goal point. In this context, we develop a method to generate optimum trajectories which allow the robot to reach the goal point using little mechanical energy while transmitting as much data as possible. This is done by optimizing the trajectory (path and velocity profile) so that the robot consumes less energy while also offering good wireless channel conditions. We consider a realistic wireless channel model as well as a realistic dynamic model for the mobile robot (considered here to be a drone). Simulations results illustrate the merits of the proposed method. 
\end{abstract}
%\begin{keywords}
%Mobile robots, trajectory planning, wireless communications.
%\end{keywords}
%
%%%%%%%%%%%%%%%%%%
\small

%%%%%%%%%%%%%%%%%%
\normalsize

\section{Introduction}
\label{sec:intro}
Communication-aware motion planning is a relatively new research area that has been gaining interest within the communications and robotics communities. In this research area, from a communications engineering perspective, the position of the transceiver, located on the robot, is another parameter of the communications system that can be controlled. The general idea behind communication-aware motion planning techniques is to control the position of the robot (or autonomous vehicle) in order to improve certain communications metrics while moving to certain places required by another task.

\subsection{State of the art}

Now we present some of the problems that have been considered within this research area. In \cite{r2,r3,r4} the authors present a control technique for a surveillance robot to compensate for the multipath fading in the wireless channel. In those articles, the robot has to follow a predefined surveillance trajectory and a control technique that adapts the velocity profile of the robot, according to the wireless channel measurements, in order to compensate for the multipath fading. Related to this, in \cite{r5,r6} we designed a technique to compensate for the multipath fading by making the robot explore some points in the vicinity of its initial position whose locations are optimized. 

The problem of designing communications-aware trajectories has attracted a lot of attention. For example, in \cite{r11} the authors derive a communication-aware trajectory planner based on harmonic potential fields which allow a robot to go from a starting point to a goal point in a finite time and in a cluttered environment while maintaining a reliable channel quality during the whole trajectory. Nevertheless, they assume full knowledge of the signal strength map, i.e., full knowledge of the wireless channel spatial variations. Another case of a communications-aware trajectory problem is found in \cite{r12,r13} where the authors design adaptive communication-aware trajectories for two different problems: the first consists of a robot that gathers channel measurements to estimate wireless channel parameters while the second problem involves a robot that must track a target while maintaining communication with an access point. In \cite{r14,r15} the authors extend their work to consider the case in which a team of robots communicating with a base station surveys an area looking for static targets. They design a trajectory planner for the robots that balances the sensing aspects of the task with the communications requirements. In \cite{r16} the authors consider the problem of a mobile robot that departs from a starting point and must reach, in a finite time, a target point by following a predetermined trajectory; in addition, the robot must transmit a finite amount of data to a base station. The authors devise a strategy to modify the velocity profile of the robot and the transmission rate in order to minimize the total amount of energy (due to both motion and communications). 

In \cite{r19}, the authors lift the restriction of the predetermined path and develop an optimum trajectory so that the drone departs from a starting point and reaches a goal point in finite time while transmitting a predefined number of bits and minimizing total energy consumption. In \cite{r17} the authors consider the case in which a mobile robot must visit a number of points of interest to gather information and then send it to a base station. They optimized the visiting order for the points of interest and the velocity profile of the robot in order to minimize the total energy consumption. They use a linear path between points of interest at all times. Also, some multi-robot scenarios have been considered in other recent works. For example, in \cite{r7,r8,r9,r10}, the authors consider the problem of designing trajectories for a robotic team in which the leader must attain a predetermined final position in finite time while maintaining a certain end-to-end transmission rate at all times; the environment (including the location of the obstacles) is assumed known. The cost function of the problem is a convex function of the desired final configuration parameters, and the environment (including the location of the obstacles) is assumed known prior to the execution of the trajectory.

In this article, we consider the problem of designing a predetermined trajectory for an autonomous robot which takes into account both energy consumption and communications aspects. More specifically, we develop a method to optimize a drone's trajectory to depart from a starting point $\mathbf{s}$ and reach a goal point $\mathbf{g}$ (determined by the user) in a finite time $t_f$ while communicating with an access point (AP) through a wireless communications channel experiencing large-scale fading (also called shadowing). The optimization is done taking into account simultaneously a communications related term (which can take various forms depending on the application) and the energy consumed by the robot by following the trajectory.

\subsection{Contribution}

Existing works addressing similar problems consider an arbitrary predetermined path and only optimize the velocity profile. To the authors' knowledge, \cite{r19} is the closest work to the problem considered in this article. Nevertheless, our work differs in two aspects: (i) in \cite{r19}, the problem considered is to transmit a predefined number of bits while minimizing the energy; in our work, the goal is to use as little energy as possible while optimizing a general communications metric (ii) in \cite{r19}, the authors consider a double integrator model for the robot while in our work we consider a more realistic dynamical model for the robot. Another difference of our work with respect to other works dealing with similar problems is that we consider both a realistic communications channel and a realistic robot model (which includes the dynamics constraints). 

In this work, we focus on designing predetermined trajectories (as opposed to adaptive trajectories), i.e. the trajectory is fully optimized before the drone starts following it. Further, the optimization is performed by assuming knowledge of only the first order statistics of the channel, which is more realistic in practice, while other works assume more knowledge of the channel, e.g. \cite{r19} or \cite{r11}. The trajectory optimization method presented in this article is also able to take into account obstacles. We will interchangeably use the terms robot, drone and quadrotor in this article.

\subsection{Organization}
The rest of this paper is organized as follows. Section \ref{sec:Model} presents the dynamical model of the drone as well as that of the communications system. The problem considered here is formally stated in Section \ref{sec:ProblemStatement}. A method to solve this problem is developed in Section \ref{sec:Solution}. Then in section \ref{sec:Obstacles} we show how to modify our method to take into account obstacles. Simulation results are presented in Section \ref{sec:Simulations} and finally conclusions are drawn in Section \ref{sec:Conclusions}.

\subsection{Notation}
$x^\mathrm{T}$ stands for the transpose of the matrix (or vector) $x$. Given a pair of mappings $(A, B)$, such that $A: \Rset^{n} \to \Rset^{n}$ and $B:$ $\Rset^{m}$ $\to$ $\Rset^{n}$, $\Rss{A}{\im{B}}$ stands for the reachability subspace and
$\mcal{C}_{(M,\,S)}$ stands for the controllability matrix, which are defined as
$\Rss{A}{\im{B}}$  $=$ ${\im{B}} + \sum_{i=1}^{n-1}A^{i}\,{\im{B}}$ and
$\mcal{C}_{(A,\,B)}$  = $\left[\begin{array}{cccc}
{B} & {A}\,{B} & \cdots & {A}^{(n-1)}\,{B}
\end{array}\right]$. Further, $\Sigma(A, B, C)$ stands for the state space system $\derT{x}{t}=Ax+Bu$ and $y=Cx$ where $x$ is the state vector, $u$ is the control signal and $y$ is the output of the system. $\mathrm{c}(\vartheta)$ and $\mathrm{s}(\vartheta)$ stand for cosine and sine of a given angle $\vartheta$, respectively. The ceiling and the floor functions are denoted as $\lceil\cdot\rceil$ and $\lfloor\cdot\rfloor$ respectively. $\hat{x}$ stands for the estimate of the variable $x$; $\mathbb{E}[\cdot]$ stands for the statistical expectation operator.

\section{System Model}
\label{sec:Model}
\subsection{Mobile robot dynamical model}
\label{sec:Model:Robot}
In this article we consider an autonomous quadrotor whose dynamical model is given by (see \cite{r1} for analytical details):
\begin{eqnarray}
\label{eq:QR_model}
\left[\begin{array}{c}
\ders{\xx}{t}{2}
\\ \noalign{\smallskip}
\ders{\yy}{t}{2}
\\ \noalign{\smallskip}
\ders{\zz}{t}{2}
\end{array}\right] &=&
\left[\begin{array}{c}
\mathrm{c}(\phi)\mathrm{s}(\theta)\mathrm{c}(\psi) +
\mathrm{s}(\phi)\mathrm{s}(\psi)
\\ \noalign{\smallskip}
\mathrm{c}(\phi)\mathrm{s}(\theta)\mathrm{s}(\psi) -
\mathrm{s}(\phi)\mathrm{c}(\psi)
\\ \noalign{\smallskip}
\mathrm{c}(\phi)\mathrm{c}(\theta)
\end{array}\right]\frac{{u}_{\zz}}{m} -
\left[\begin{array}{c}
0 \\ 0 \\ g
\end{array}\right]
\nonumber\\
\end{eqnarray}
\begin{eqnarray}
\label{eq:QR_model2}
\left[\begin{array}{c}
\ders{\phi}{t}{2}
\\ \noalign{\smallskip}
\ders{\theta}{t}{2}
\\ \noalign{\smallskip}
\ders{\psi}{t}{2}
\end{array}\right] &=&
\left[\begin{array}{c}
\left(\frac{I_\yy-I_\zz}{I_\xx}\right)\,\der{\theta}{t}\,\der{\psi}{t} -
\frac{J}{I_\xx}\,\der{\theta}{t}\,{q}_{w}
\\ \noalign{\smallskip}
\left(\frac{I_\zz-I_\xx}{I_\yy}\right)\,\der{\phi}{t}\,\der{\psi}{t}+
\frac{J}{I_\yy}\,\der{\phi}{t}\,{q}_{w}
\\ \noalign{\smallskip}
\left(\frac{I_\xx-I_\yy}{I_\zz}\right)\,\der{\phi}{t}\,\der{\theta}{t}
\end{array}\right] +
\left[\begin{array}{c}
\frac{\ell\,{u}_{\yy}}{I_\xx}
\\ \noalign{\smallskip}
\frac{\ell\,{u}_{\xx}}{I_\yy}
\\ \noalign{\smallskip}
\frac{\,{u}_{\psi}}{I_\zz}
\end{array}\right]
\nonumber\\
\end{eqnarray}
\begin{eqnarray}
\label{eq:Inp_vec}
\notag
\left[\begin{array}{c}
{u}_{\xx} \\ {u}_{\yy} \\ {u}_{\zz} \\ {u}_{\psi}
\end{array}\right] &=&
\left[\begin{array}{cccc}
-\kappa_b&0&\kappa_b&0\\
0&\kappa_b&0&-\kappa_b\\
\kappa_b&\kappa_b&\kappa_b&\kappa_b\\
\kappa_\tau&-\kappa_\tau&\kappa_\tau&-\kappa_\tau
\end{array}\right]
\left[\begin{array}{c}
{\omega}_{1}^{2} \\ {\omega}_{2}^{2} \\ {\omega}_{3}^{2} \\ {\omega}_{4}^{2}
\end{array}\right]
\\
\end{eqnarray}
\begin{eqnarray}
\label{eq:qw_vec}
{q}_{w}&=&\omega_1 - \omega_2 + \omega_3 - \omega_4
\end{eqnarray}

\noindent
where ${u}_{\xx}(t)$ ${u}_{\yy}(t)$ ${u}_{\zz}(t)$ and ${u}_{\psi}(t)$ denote the control signals for the drone;  $\omega_j(t)$ is the angular velocity of the $j$th motor; $m$ is the total mass of the drone, $g$ is the gravitational constant; $\ell$ is the distance from the center of the quadrotor to each motor;  $I_\xx$, $I_\yy$ and $I_\zz$ are the rotational inertia components; $J$ is the total inertia of the motors; and $\kappa_b$ and $\kappa_\tau$ are the thrust and aerodynamic drag factors of the propellers\footnote{Note that the matrix relating the vector inputs 
$\left[\begin{array}{cccc}
{u}_{\zz} & {u}_{\yy} & {u}_{\xx} & {u}_{\psi}
\end{array}\right]^{T}
$ with the square angular velocities vector
$\left[\begin{array}{cccc}
{\omega}_{1}^{2}(t) & {\omega}_{2}^{2}(t) & {\omega}_{3}^{2}(t) & {\omega}_{4}^{2}(t) \end{array}\right]^{\mathrm{T}}$ is not singular.}.  

Equation (\ref{eq:QR_model2}) describes the drone's Euler angles (roll $(\phi)$, pitch $(\theta)$ and yaw $(\psi)$) measured with respect to the axes $o_{B}{\xx}_{B}$, $o_{B}{\yy}_{B}$ and $o_{B}{\zz}_{B}$, with $(o_{B}{\xx}_{B}{\yy}_{B}{\zz}_{B})$ being the body axis system whose origin ${o_{B}}$ is given by the geometric centre of the quadrotor. 

The motion of the quadrotor is described with respect to a fixed orthogonal axis set ${(o{\xx}{\yy}{\zz})}$, where ${o{\zz}}$ points vertically up along the gravity vector ${\left[\begin{array}{ccc} 0&0&-g \end{array}\right]^{T}}$ (earth axes). The origin ${o}$ is located at a desired height ${\bar{\zz}}$ with respect to the ground level. The coordinates  $\xx$, $\yy$ and $\zz$ in (\ref{eq:QR_model}) refer to the position of the centre of gravity of the quadrotor in the space where $\zz$ is its altitude \cite{Cook}.

The time dependence of the variables in (\ref{eq:QR_model}) and (\ref{eq:QR_model2}) is not explicitly shown in order to lighten the notation. Note also that, due to the symmetry of the quadrotor, we have $I_\xx=I_\yy=I$.

The dynamical model of the quadrotor described by equations (\ref{eq:QR_model})-(\ref{eq:Inp_vec}) is nonlinear which makes it difficult to control. So, in the Appendix \ref{Appendix:A} we introduce a new linearization method which transforms the nonlinear quadrotor model (\ref{eq:QR_model})-(\ref{eq:Inp_vec}) into the following linear model which we will use in the rest of the article:
\begin{equation}
\label{eq:Rep_edo_lin-pert2}
\begin{array}{l}
\derT{{\zeta}_{\zz}}{t} = {A}_{\zz}\,{\zeta}_{\zz},
\ \
\derT{{\zeta}_{\psi}}{t} = {A}_{\psi}\,{\zeta}_{\psi},
\\ \noalign{\smallskip}
\derT{{\zeta}_{\xx}}{t} = {A}_{\xx}\,{\zeta}_{\xx} +
{B}_{\xx}\bar{u}_{\xx},
\ \
{p}_{\xx} = {C}_{\xx}\,{\zeta}_{\xx},
\\ \noalign{\smallskip}
\derT{{\zeta}_{\yy}}{t} = {A}_{\yy}\,{\zeta}_{\yy} +
{B}_{\yy}\bar{u}_{\yy},
\ \
{p}_{\yy} = {C}_{\yy}\,{\zeta}_{\yy}.
\end{array}
\end{equation}
where ${\zeta}_{\xx}$, ${\zeta}_{\yy}$, and ${\zeta}_{\psi}$ are state variables defined in Appendix \ref{Appendix:A}, and $\bar{\mathbf{u}}(t) \triangleq[\bar{u}_{\xx}(t)\ \ \bar{u}_{\yy}(t)]^\mathrm{T}$ is the new control signal after the linearization procedure. As mentioned in Appendix \ref{Appendix:A} the model (\ref{eq:Rep_edo_lin-pert2}) allows to control the position of the quadrotor in the $\xx-\yy$ plane of the fixed axis set ($o\xx\yy\zz$) which we denote $\mathbf{p}(t) \triangleq[{p}_{\xx}(t)\ \ {p}_{\yy}(t)]^\mathrm{T}$. In addition to the linearization procedure, the altitude and the yaw angle are regulated in order to make them remain constant.

Concerning the energy consumed by the robot from time $t_0$ to $t_1$,  we will assume for simplicity the following model:
\begin{equation}
E(t_0,t_1)=\displaystyle\int_{t_0}^{t_1}\|\bar{\mathbf{u}}(t)\|_2^2\mathrm{d}t.
\end{equation}

It is worth pointing out that while in this paper we illustrate the proposed trajectory design method using a quadrotor, the same methodology can be applied to other types of robots. Note also that we consider a full 3D dynamical nonlinear model of the quadrotor which we then linearize; we also restrict the movement of the robot to lie on a horizontal plane. By doing this we are able to still take into account the full dynamics\footnote{This makes the problem more realistic and allows to design trajectories that indeed can be followed by the robot under consideration.} of the quadrotor during the trajectory design.

\subsection{Communication system model}
As mentioned before, we assume that the wireless channel between the quadrotor and the AP experiences large-scale fading (a.k.a. shadowing) \cite{r20} and so the signal received by the AP from the quadrotor is:
\begin{equation}
\label{eq.1.2.1}
r_{\mathrm{AP}}(t,\mathbf{p}(t))=\left(\frac{h(\mathbf{p}(t))}{\|\mathbf{p}(t)\|_2^{\alpha/2}}\right) x_{\mathrm{Tx}}(t)+z_{\mathrm{AP}}(t),
\end{equation}
where $\alpha$ is the power path loss coefficient; $x_{\mathrm{Tx}}(t)$ is the signal transmitted by the drone with average transmission power $\mathbb{E}[|x_{\mathrm{Tx}}(t)|^2]=P$; and $z_{\mathrm{AP}}(t)$ is a zero mean additive white Gaussian noise (AWGN) with variance $\sigma^2$ at the AP's receiver. The shadowing is represented by $h({\mathbf{p}(t)})$ which is assumed to be lognormally distributed \cite{r18}. From (\ref{eq.1.2.1}), the signal-to-noise ratio (SNR) at the AP is:
\begin{equation}
\label{eq.1.2.4}
\Gamma(\mathbf{p}(t))=\frac{h^2(\mathbf{p}(t)) P}{\|\mathbf{p}(t)\|_2^{\alpha}\sigma^2}
\end{equation}

Now, the drone communicates with the AP using time duplexing \cite{b10} with period $T$. During the reception phase, the AP transmits a pure tone so that the drone estimates the signal-to-noise ratio (SNR). Then, during the transmission phase the drone transmits data to the AP at a certain data rate $R\left({\Gamma}(\mathbf{p}(kT))\right)$ which depends on the SNR estimated during the reception phase of the current duplexing period:
\begin{equation}
\label{eq.1.2.5}
R\left(\hat{\Gamma}(\mathbf{p}(kT))\right)=R_j, \ \ \ \forall\ \hat{\Gamma}(\mathbf{p}(kT))\in[\gamma_{j},\gamma_{j+1}),
\end{equation}
where $j=0,1,\cdots,J$, with $J$ being the number of different bit rates (different from zero) supported by the drone and the AP; $R_j<R_{j+1}$, $\gamma_j<\gamma_{j+1}$, $R_0=0$, $\gamma_0=0$; and $\gamma_1$ is a value that must be above the sensitivity of the AP's receiver. Note that the number of bits transmitted during the transmission phase of each duplexing period is $ T_{tx} R\left(\hat{\Gamma}(\mathbf{p}(kT))\right)$ where $T_{tx}<T$ is the duration of the transmission phase. Finally we assume that the energy spent by the robot due to communication is negligible compared to energy due to motion.

\section{Problem Statement}
\label{sec:ProblemStatement}
We want to design a predetermined trajectory for the drone to go from a starting point $\mathbf{s}$ to a predefined goal point $\mathbf{g}$ in a finite time $t_f$ while communicating with an AP. The optimization will be done taking into account simultaneously two terms: i) the energy consumed by the quadrotor in motion and ii) a communications related term.

To do this we devise the cost function as the convex combination of two terms: (i) the energy consumed by the quadrotor when following the trajectory; (ii) a communications related metric represented by a function of the expected number of bits transmitted during the trajectory tracking. Hence the cost function: 
\begin{equation}
\label{eq:2.0.1}
\mathcal{J}(\bar{\mathbf{u}},\lambda,t_f)=\frac{\lambda}{E_0} \displaystyle\int_0^{t_f}\left\|\bar{\mathbf{u}}(t)\right\|_2^2\mathrm{d}t
+(1-\lambda)w\left(\mathcal{D}_{\lfloor\frac{t_f}{T}\rfloor}\right)
\end{equation}
where $\mathcal{D}_{\lfloor\frac{t_f}{T}\rfloor}=T_{tx}\displaystyle\sum_{k=0}^{\lfloor\frac{t_f}{T}\rfloor}\mathbb{E}\left[R\left(\Gamma(\mathbf{p}(kT))\right)\right]$; $\lambda$ determines the relative importance of minimizing the mechanical energy criterion with respect to the communications criterion; $E_0$ is a normalization factor consisting of the energy spent by the robot for moving from $\mathbf{s}$ to a goal point $\mathbf{g}$ within a duration $t_f$ using a minimum energy control. As a consequence of this normalization, the range of variation of the first term is $[1,+\infty)$. The function $w(\cdot)$ is a general non-linear function which can take different forms depending on the communications criteria considered which will depend on the particular communications aware trajectory planning problem considered. The argument of $w(\cdot)$ is the number of bits that can be transmitted during the trajectory and the expectation\footnote{Since we are designing a predefined trajectory we do not have access to wireless channel measurements. In addition, the wireless channel is a random process, thus the need of taking the expected value.} of this argument is taken with respect to the shadowing term (i.e., with respect to $h(\mathbf{p}(kT))$):
\begin{equation}
\label{eq:2.0.2}
\mathbb{E}\left[R\left(\Gamma(\mathbf{p}(kT))\right)\right]\triangleq\int_0^{+\infty}R\left(\frac{x^2 P}{\|\mathbf{p}(kT)\|_2^{\alpha}\sigma^2}\right)f_s(x)\mathrm{d}x,
\end{equation} 
where $f_s(x)$ is the probability density function of the shadowing which corresponds to a lognormal distribution as mentioned before.

As mentioned above, the function $w(\cdot)$ can take many forms depending on the particular problem considered. To illustrate this, we briefly present two different communications aware trajectory planning optimization problems and the form that $w(\cdot)$ could take in each case\footnote{The only numerical requirement is that the range of variation of $w(\cdot)$ must be similar to that of the first term in (\ref{eq:2.0.1}) for numerical convenience.}. In both problems the quadrotor has to go from a starting point $\mathbf{s}$ to a predefined goal point $\mathbf{g}$ in a finite time $t_f$ while communicating with an AP but solving different problems:
\begin{enumerate}
\item {\it Minimum energy-maximum data}: in this problem we want to devise a trajectory in which the quadrotor uses little energy in motion while transmitting  as much data as possible to the AP. In this case the communications function $w$ can take the following form:
\begin{equation}
\label{eq:2.0.3}
w\left(\mathcal{D}_{\lfloor\frac{t_f}{T}\rfloor}\right)=W_0\left(\mathcal{D}_{\lfloor\frac{t_f}{T}\rfloor}\right)^{-1}
\end{equation}
where $W_0=T_{tx}R_J(\lfloor t_f/T\rfloor+1)$ is a normalization term. The function $w$ consists of the supremum of the number of bits that can be transmitted during the time $t_f$ divided over the expected number of bits transmitted. Note that (\ref{eq:2.0.3}) is a decreasing function of the number of bits transmitted during the trajectory.

\item {\it Minimum energy- fixed transmission quota}: in this problem we want to devise a trajectory in which the quadrotor uses little energy in motion while transmitting an average number of $N_0$ bits to the AP. In this case the communications function $w$ can take the following form:
\begin{equation}
\label{eq:2.0.4}
w\left(\mathcal{D}_{\lfloor\frac{t_f}{T}\rfloor}\right)=\exp\left(\eta\left(N_0-\mathcal{D}_{\lfloor\frac{t_f}{T}\rfloor}\right)\right)
\end{equation}
where $\eta>0$ is a constant with a very large value. The function (\ref{eq:2.0.4}) is a decreasing with respect of the number of bits transmitted during the trajectory. If $\eta$ is large enough then when $\displaystyle T_{tx}\sum_{k=0}^{\lfloor t_f/T\rfloor}\mathbb{E}\left[R\left(\Gamma(\mathbf{p}(kT))\right)\right]>N_0$ the communications function $w$ tends to zero but when the opposite happens then the communications function tends to infinity. In this case (\ref{eq:2.0.4}) operates as as penalization term. 

\end{enumerate}
From (\ref{eq:2.0.3})-(\ref{eq:2.0.4}) we can see that indeed the cost function (\ref{eq:2.0.1}) can be used to describe different communications aware trajectory planning optimization problems.

Now, optimizing (\ref{eq:2.0.1}) with respect to the control law $\bar{\mathbf{u}}$, under the constraints which describe the dynamical model of the drone and the requirements for its trajectory mentioned at the beginning of this section, is a complex problem. To the authors' knowledge such a problem cannot be solved analytically and so it must be solved numerically. Further, the fact that the expected value in (\ref{eq:2.0.1}) is a continuous nonlinear function of the drone's position $\mathbf{p}(kT)$ increases the computational complexity of the problem. All of these reasons motivate us to search for a sub-optimal, yet tractable method of solving this problem for any form of the communications function $w$ as long as it remains a function of the same argument. 

\section{Solution}
\label{sec:Solution}
The existing works that have addressed a somewhat similar problem either consider a fixed path and only optimize the velocity profile (e.g., \cite{r16}), assume a simplistic channel model or full channel knowledge, (e.g. \cite{r11},  \cite{r20}) or assume a simplified robot model \cite{r19}. In our work, we jointly optimize the path and velocity profiles of the robot, while considering realistic channel models and a comprehensive dynamical model for the robot. The optimization is done taking into account the mechanical energy spent in motion and a general communications related term which is a function of the expected number of bits transmitted. We develop a new method to solve this class of communications aware trajectory optimization problem. Note that the presence of obstacles will not be considered until section \ref{sec:Obstacles}.

\subsection{Rate quantization}
As mentioned before, optimizing (\ref{eq:2.0.1}) is an extremely complicated problem and it is not possible to do it analytically. In this method we will approximate some parts of the original optimization problem in order to be able to use analytical results in its resolution.

Let us start by looking at the typical shape of $\mathbb{E}[R\left({\Gamma}(\mathbf{p}(kT))\right)]$ in Fig. \ref{Figure0}. As we can observe, in general, it tends to present large regions which are significantly flat as long as the variance of the shadowing is not too large. Hence it can be well approximated by a quantized version with a reduced number of quantization levels. This approximation will allow us (as will be observed later in this section) to decouple the control aspects from the communications aspects in the optimization problem.
\begin{figure}[h]
\centerline{\includegraphics[clip, trim ={2mm 1mm 2mm 1mm}, height=4cm,width=8cm]{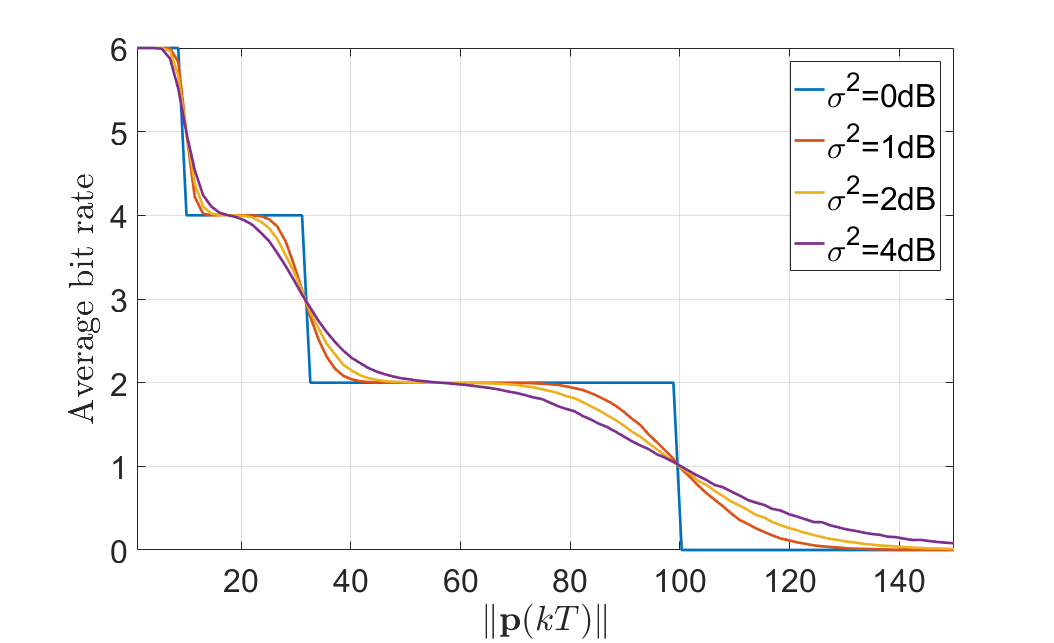}}
\vspace{-2mm}
\caption{Average bit rate $\mathbb{E}[R\left({\Gamma}(\mathbf{p}(kT))\right)]$ for $R_0=0$, $R_1=2$, $R_2=4$, $R_3=6$, $\eta_1=0.001$, $\eta_2=0.01$, $\eta_3=0.1$,  $\alpha=2$ and different shadowing variances.}
\label{Figure0}
\end{figure}

Then, the first step in the resolution method is to replace $\mathbb{E}[R\left({\Gamma}(\mathbf{p}(kT))\right)]$ by its quantized version which we will refer to as $f_Q(\mathbf{p}(kT))$ with $Q$ a parameter design that determine the number of quantization levels. Larger number of quantization levels $Q$ will result in a better approximation, but also a higher computational complexity. The function $f_Q(\mathbf{p}(kT))$ can be derived from $\mathbb{E}[R\left({\Gamma}(\mathbf{p}(kT))\right)]$ using a uniform quantization or an optimized quantization method as the one observed in Appendix \ref{Appendix:C}.

Once we replace $\mathbb{E}[R\left({\Gamma}(\mathbf{p}(kT))\right)]$ in (\ref{eq:2.0.1}) with $f_Q(\mathbf{p}(kT))$ the cost function becomes:
\begin{eqnarray}
\label{eq:3.0.6}
\mathcal{J}'(\bar{\mathbf{u}},\lambda,t_f)&=&\displaystyle\frac{\lambda}{E_0} \displaystyle\int_0^{t_f}\left\|\bar{\mathbf{u}}(t)\right\|_2^2\mathrm{d}t\nonumber\\
&+& (1-\lambda)w\left(T_{tx}\displaystyle\sum_{k=0}^{\lfloor\frac{t_f}{T}\rfloor}f_Q(\mathbf{p}(kT))\right).
\end{eqnarray}

Taking all of the above into account, the problem of optimizing a trajectory that makes the quadrotor go from a starting point $\mathbf{s}$ to a goal point $\mathbf{g}$ in a time $t_f$ while using little energy in motion and minimizing a communications related metric can be stated mathematically as:
\begin{equation}
\label{eq:3.0.7}
    \begin{array}{l}
\displaystyle \minimize_{\bar{\mathbf{u}}}\ \ \ \ \ \ \mathcal{J}'(\bar{\mathbf{u}},\lambda,t_f)\\
{\rm s.t.}\\
\derT{{\zeta}_{\xx}}{t} = {A}_{\xx}\,{\zeta}_{\xx} +
{B}_{\xx}\bar{u}_{\xx},\ \
{p}_{\xx} = {C}_{\xx}\,{\zeta}_{\xx},\\
\derT{{\zeta}_{\yy}}{t} = {A}_{\yy}\,{\zeta}_{\yy} +
{B}_{\yy}\bar{u}_{\yy},\ \
{p}_{\yy} = {C}_{\yy}\,{\zeta}_{\yy},\\
{{\zeta}_\xx(0)} =
\left[\begin{array}{cccc}
s_{\xx} & 0 & 0 & 0
\end{array}\right]^{\mathrm{T}}
,\ 
{{\zeta}_\yy(0)} =
\left[\begin{array}{cccc}
s_{\yy} & 0 & 0 & 0
\end{array}\right]^{\mathrm{T}}
,\\
{{\zeta}_\xx(t_f)} =
\left[\begin{array}{cccc}
\mathrm{g}_{\xx} & 0 & 0 & 0
\end{array}\right]^{\mathrm{T}}
,\ 
{{\zeta}_\yy(t_f)} =
\left[\begin{array}{cccc}
\mathrm{g}_{\yy} & 0 & 0 & 0
\end{array}\right]^{\mathrm{T}}
,\\
\end{array}
\end{equation}
where ${s}_\xx$, ${s}_{\yy}$,  $\mathrm{g}_\xx$ and $\mathrm{g}_{\yy}$ are the $x$ and $y$ components of $\mathbf{s}$ and $\mathbf{g}$ respectively. We remind the reader that $\bar{\mathbf{u}}(t) \triangleq[\bar{u}_{\xx}\ \ \bar{u}_{\yy}]^\mathrm{T}$ and also that $\mathcal{J}'(\bar{\mathbf{u}},\lambda,t_f)$ depends on $\mathbf{p}(t)=[{p}_{\xx}(t)\ \ {p}_{\yy}(t)]^\mathrm{T}$, the position of the drone (see (\ref{eq:3.0.6})) which is related to the state vectors ${\zeta}_{\xx}$ and ${\zeta}_{\yy}$ as indicated in subsection \ref{sec:Model:Robot}. The constraints in (\ref{eq:3.0.7}) describe the dynamical model of the drone and the fact that it must depart from the starting point $\mathbf{s}$, reach $\mathbf{g}$ at time instant $t_f$ and are still at both ends of the trajectory. In theory we could solve (\ref{eq:3.0.7}) using dynamic programming but it would require an extremely large amount of calculations making this approach intractable. So, in the next section we derive another method which produces a suboptimal solution by taking advantage of the fact that $f_Q(\mathbf{p}(kT))$ is a discrete function of quadrotor's position $\mathbf{p}(t)$.

The method proposed to solve the optimization problem (\ref{eq:3.0.7}) is divided in two phases. The first phase, presented in \ref{sec:Solution:decomposition},  takes an top-down approach to decompose (\ref{eq:3.0.7}) into a sequence of control problems that are simpler to handle, using standard results in optimal control theory, and in which the communication aspect does not explicitly appear. The second phase, presented in \ref{sec:Solution:integration},  takes a bottom-up approach to solve and combine the solutions of the smaller problems derived in the previous phase. After combining all the solutions, we obtain a (suboptimal\footnote{The solution derived is suboptimal due to the approximation made in \ref{sec:Solution:decomposition} which makes the problem more tractable.}) solution to (\ref{eq:3.0.7}). The reasoning behind this approach is that it is easier to solve multiple simple problems than a single complex problem. In the next subsection we describe the top-down decomposition.
 
\subsection{Top-down Decomposition}
\label{sec:Solution:decomposition}

The first step in the decomposition of (\ref{eq:3.0.7}) is to divide it in $Q-1$ different problems. To do this, we start by defining the regions $\mathcal{A}_j$ as:
\begin{equation}
\label{eq:3.0.8}
\mathcal{A}_j\triangleq\{\mathbf{p}\ |\ f_Q({R}\left(\Gamma(\mathbf{p}(kT))\right))=R_j^Q\},\  \ j=1,2,\cdots,Q
\end{equation}
where $R_j^Q>R_{j-1}^Q$. Due to the radial symmetry of the wireless channel model the region $\mathcal{A}_Q$ is circular while regions $\{\mathcal{A}_j\}_{j=2}^{Q-1}$ have ring shapes. We will refer to the circle dividing regions $\mathcal{A}_j$ and $\mathcal{A}_{j+1}$ as the $j$th border.  

Each of the $Q-1$ optimization problems\footnote{One problem per value of $j$.} mentioned above will have the same form as (\ref{eq:3.0.7}) but with the additional constraint that the drone must pass through the regions $\{\mathcal{A}_k\}_{k=1}^{j}$. To add such constraints, we first define $\mathbf{i}_k$ and $\mathbf{o}_k$ as crossing points of the boundary of the convex hull of $\mathcal{A}_k$. Then we group these points in the set $\mathcal{C}^j=\{\mathbf{s},$ $\mathbf{i}_1,$ $\mathbf{i}_2,$ $\cdots,\mathbf{i}_j,$ $\mathbf{o}_j,$ $\mathbf{o}_{j-1},$ $\cdots,$ $\mathbf{o}_1,$ $\mathbf{g}\}$ and index its elements as follows:
\begin{equation}
\label{eq:4.0.2}
\begin{array}{l}
\mathbf{c}^j_n=\mathbf{i}_n,\ \ \ {\rm for}\ \ \ n=1,2,\cdots,j,\\
\mathbf{c}^j_n=\mathbf{o}_{2j+1-n},\ \ \ {\rm for}\ \ \ n=j+1,j+2,\cdots,2j,\\
\mathbf{c}^j_0=\mathbf{s},\ \ \mathbf{c}^j_{2j+1}=\mathbf{g}.
\end{array}
\end{equation}
where $\mathbf{c}^j_n$ is the $n$th element of the set $\mathcal{C}^j$; $\mathbf{s}$ and $\mathbf{g}$ are the starting and ending points of the trajectory. We also define $t_n$ as the time instant in which the drone passes by $\mathbf{c}^j_n$ and:
 \begin{equation}
\label{eq:4.0.3}
\tau_{n}\triangleq(t_{n+1}-t_{n}), \ \ n=0,1,\cdots,2j.
 \end{equation}
Then, we complete the first step in the top-down decomposition by assuming\footnote{This assumption holds in most cases as long as the quadrotor do not have to perform an abrupt change of direction within a short time. This assumption simplifies the optimization problem and will allow us to derive a suboptimal solution.} that the drone remains in the same region $\mathcal{A}_k$ during the whole period $t\in[t_{n-1},t_n)$ as it goes from $\mathbf{c}^j_{n-1}$ to $\mathbf{c}^j_n$. Such an assumption eliminates the dependence of the communications term in the cost function (\ref{eq:3.0.6}) on the position $\mathbf{p}(t)$ and makes it dependent on the time intervals characterized by $\left\{\tau_{n}\right\}_{n=0}^{2n}$ instead. This key step \emph{decouples the communications aspect from the control aspect} and makes the problem more tractable. By doing so, the argument of the communications term in (\ref{eq:3.0.6}) (i.e., the quantized version of the expected number of bits transmitted during the trajectory) is then approximated by:
\begin{equation}
\label{eq:4.0.4a}
\begin{array}{r}
T_{tx}\displaystyle\sum_{k=0}^{\lfloor\frac{t_f}{T}\rfloor}f_{_Q}(\mathbf{p}(kT)) 
\approx \frac{T_{tx}}{T}\left(\tau_{_j}R_j^Q+\displaystyle\displaystyle\sum_{k=0}^{j-1}\left(\tau_{_k}+\tau_{_{2j-k}}\right)R_k^Q\right)\\
\triangleq\mathcal{B}\left(\{\tau_{_n}\}_{n=0}^{2j}\right)
\end{array}
\end{equation}

Now, we constraint the drone to pass through $\{\mathcal{A}_k\}_{k=1}^{j}$, use the approximation (\ref{eq:4.0.4a}), and fix $\{\tau_n\}_{n=1}^{2j+1}$. Hence, we obtain the following set of optimum control problems:
\begin{equation}
\label{eq:4.0.5}
    \begin{array}{l}
\displaystyle \minimize_{\bar{\mathbf{u}}}\ \ \ \ \ \ \int_0^{t_f}\left\|\bar{\mathbf{u}}(t)\right\|_2^2\mathrm{d}t\\
{\rm s.t.}\\
\derT{{\zeta}_{\xx}}{t} = {A}_{\xx}\,{\zeta}_{\xx} +
{B}_{\xx}\bar{u}_{\xx},\ \
\derT{{\zeta}_{\yy}}{t} = {A}_{\yy}\,{\zeta}_{\yy} +
{B}_{\yy}\bar{u}_{\yy},\\
{\zeta_\xx(0)} =
\left[\begin{array}{cccc}
{s}_\xx & 0 & 0 & 0
\end{array}\right]^{\mathrm{T}}
,\ \
{\zeta_\yy(0)} =
\left[\begin{array}{cccc}
{s}_\yy & 0 & 0 & 0
\end{array}\right]^{\mathrm{T}}
,\\
{\zeta_\xx(t_f)} =
\left[\begin{array}{cccc}
\mathrm{g}_\xx & 0 & 0 & 0
\end{array}\right]^{\mathrm{T}}
,\ \
{\zeta_\yy(t_f)} =
\left[\begin{array}{cccc}
\mathrm{g}_\yy & 0 & 0 & 0
\end{array}\right]^{\mathrm{T}}
,\\
\mathbf{p}(t_n^j)=\mathbf{c}_n^j, \ \ n=0,1,\cdots,2j+1,\ j=0,1,\cdots,Q-1,\\
\end{array}
\end{equation}
The solution of (\ref{eq:4.0.5}) is a minimum norm control law $\bar{\mathbf{u}}_j^*$ that makes the drone to pass through all the points in $\mathcal{C}^j$ at time instances $\{t_n^j\}_{n=0}^{2j+1}$. Such optimum trajectory can be decomposed into $2j+1$ subtrajectories and due to Pontryagin's minimum principle \cite{b1} these subtrajectories must also be optimum trajectories between the points $\mathbf{c}_n^j$ to $\mathbf{c}_{n+1}^j$ for $n=0,1,\cdots,2j+1$. 

So, the second step in our top-down decomposition process is to further decompose (\ref{eq:4.0.5}) in order to be able to obtain the optimum control laws that generate the optimum subtrajectories mentioned above. This results in the following problem:
\begin{equation}
\label{eq:4.0.6}
    \begin{array}{l}
\displaystyle \minimize_{\bar{\mathbf{u}}}\ \ \ \ \ \ \int_{t_n}^{t_{n+1}}\left\|\bar{\mathbf{u}}(t)\right\|_2^2\mathrm{d}t\\
{\rm s.t.}\\
\derT{{\zeta}_{\xx}}{t} = {A}_{\xx}\,{\zeta}_{\xx} +
{B}_{\xx}\bar{u}_{\xx},\ \
\derT{{\zeta}_{\yy}}{t} = {A}_{\yy}\,{\zeta}_{\yy} +
{B}_{\yy}\bar{u}_{\yy},\\
{\zeta_\xx(t_{n})} =
\left[\begin{array}{cccc}
{c}_{n,x}^j & a_{n,x,1}^j & a_{n,x,2}^j & a_{n,x,3}^j
\end{array}\right]^{\mathrm{T}}
,\\
{\zeta_\yy(t_{n})} =
\left[\begin{array}{cccc}
{c}_{n,y}^j & a_{n,y,1}^j & a_{n,y,2}^j & a_{n,y,3}^j
\end{array}\right]^{\mathrm{T}}
,\\
{\zeta_\xx(t_{n+1})} =
\left[\begin{array}{cccc}
{c}_{n+1,x}^j & a_{n+1,x,1}^j & a_{n+1,x,2}^j & a_{n+1,x,3}^j
\end{array}\right]^{\mathrm{T}}
,\\
{\zeta_\yy(t_{n+1})} =
\left[\begin{array}{cccc}
{c}_{n+1,y}^j & a_{n+1,y,1}^j & a_{n+1,y,2}^j & a_{n+1,y,3}^j
\end{array}\right]^{\mathrm{T}}
,\\
n=0,1,\cdots,2j+1,\ \ j=0,1,\cdots, Q-1,\\
\end{array}
\end{equation}
where ${c}_{n,x}^j$ and ${c}_{n,y}^j$ are the $x$ and $y$ components of $\mathbf{c}_n^j$ respectively, while $\{a_{n,x,k}^j,a_{n,y,k}^j\}_{k=1}^3$ are parameters to be optimized in the next subsection. This concludes the decomposition phase. In the next section we solve all the problems presented in this section and integrate them to obtain a solution to the optimization problem (\ref{eq:3.0.7}).

\subsection{Bottom-up Integration} 
\label{sec:Solution:integration}
After having decomposed the optimization problem (\ref{eq:3.0.7}) in various sub-problems in the previous subsection, we proceed now to solve these sub-problems and combine their solutions. We start by analytically solving (\ref{eq:4.0.6}) (see appendix \ref{Appendix:B}). For $t_{n} \leq t < t_{n+1}$ we have that:
\begin{equation}
\label{eq:4.0.7}
{u}_{*i}(t) =\mcal{F}_i(t_{n+1}-t)\,\eu{W}_{\tau_{n}i}^{-1}\,(\zeta_i(t_{n+1})- \exp\left(A_i\tau_n\right)\zeta_i(t_{n})),\\
\end{equation}
where $i\in\{\xx,\yy\}$, $\exp\left(A_y\tau_{n}\right)$ is the exponential matrix of $A_y\tau_{n}$ and $\eu{W}_{\tau_{n}x}$ is given in Appendix \ref{Appendix:B}. Note that (\ref{eq:4.0.7}) is a minimum norm control law that takes the drone from $\mathbf{c}_n^j$ at time instant $t_n$ to $\mathbf{c}_{n+1}^j$ at time instant $t_{n+1}$ and whose state vectors $\zeta_\xx(t_{n})$, $\zeta_\yy(t_{n})$, $\zeta_\xx(t_{n+1})$ and $\zeta_\yy(t_{n+1})$ depend on the set of points $\mathcal{C}^j$ and on $\{a_{n,x,k}^j,a_{n,y,k}^j\}_{k=1}^3$ and $\{a_{n+1,x,k}^j,a_{n+1,y,k}^j\}_{k=1}^3$ as described in (\ref{eq:4.0.6}). Note that by applying the control law ${u}_{*i}(t) $ (\ref{eq:4.0.7}) during $t\in[t_n,t_{n+1}]$ the states $\zeta_\xx(t)$ and $\zeta_\yy(t)$ are then given by (\ref{eq:4.0.7a}) where $M_i(t;t_n)$ is given by (\ref{eq:4.0.7b}).

\begin{figure*}[t]
\begin{eqnarray}
\zeta_i(t)&=&\exp(A_it)M_i(t;t_n)(M_i(t_{n+1};t_n))^{-1}(\exp(A_it_{n+1}))^{-1}\,(\zeta_i(t_{n+1})- \exp\left(A_i\tau_n\right)\zeta_i(t_{n}))+\exp\left(A_i(t-t_n)\right)\zeta_i(t_{n}),\nonumber\\
&&t\in[t_n,t_{n+1}]
\label{eq:4.0.7a}
\end{eqnarray}
\begin{eqnarray}
M_i(t;t_n) & = & B_iB_i^\mathrm{T}(t-t_n)-\frac{1}{2}\left(A_iB_iB_i^\mathrm{T}+B_iB_i^\mathrm{T}A_i^\mathrm{T}\right)(t^2-t_n^2)\nonumber\\
&+&\frac{1}{3}\left(\frac{1}{2}A_i^2B_iB_i^\mathrm{T}+A_iB_iB_i^\mathrm{T}A_i^\mathrm{T}+\frac{1}{2}B_iB_i^\mathrm{T}(A_i^\mathrm{T})^2\right)(t^3-t_n^3)\nonumber\\
&-&\frac{1}{4}\left(\frac{1}{6}A_i^3B_iB_i^\mathrm{T}+\frac{1}{2}A_i^2B_iB_i^\mathrm{T}A_i^\mathrm{T}+\frac{1}{2}A_iB_iB_i^\mathrm{T}(A_i^\mathrm{T})^2+\frac{1}{6}B_iB_i^\mathrm{T}(A_i^\mathrm{T})^3\right)(t^4-t_n^4)\nonumber\\
&+&\frac{1}{5}\left(\frac{1}{6}A_i^3B_iB_i^\mathrm{T}A_i^\mathrm{T}+\frac{1}{4}A_i^2B_iB_i^\mathrm{T}(A_i^\mathrm{T})^2+\frac{1}{6}A_iB_iB_i^\mathrm{T}(A_i^\mathrm{T})^3\right)(t^5-t_n^5)\nonumber\\
&-&\frac{1}{6}\left(\frac{1}{12}A_i^3B_iB_i^\mathrm{T}(A_i^\mathrm{T})^2+\frac{1}{12}A_i^2B_iB_i^\mathrm{T}(A_i^\mathrm{T})^3\right)(t^6-t_n^6)+\frac{1}{7}\left(\frac{1}{36}A_i^3B_iB_i^\mathrm{T}(A_i^\mathrm{T})^3\right)(t^7-t_n^7),
\label{eq:4.0.7b}
\end{eqnarray}
\vspace*{4pt}
\rule[0.5ex]{1\textwidth}{1pt}
\end{figure*}

For a particular value of $j$, we solve (\ref{eq:4.0.5}) by first concatenating the set of the $2j+1$ parameterized control laws resulting from (\ref{eq:4.0.7}) which are solutions to (\ref{eq:4.0.6}). We denote such a concatenated control law by $\bar{\mathbf{u}}_j^*(t,\bm{\alpha}^j,\{\tau_n\}_{n=0}^{2j},\mathcal{C}^j)$ which is parameterized over $\bm{\alpha}^j$ (as well as over $\{\tau_n\}_{n=0}^{2j}$ and $\mathcal{C}^j$): 
\begin{equation}
\begin{array}{r}
\bm{\alpha}^j=
\left[\{\bm{\alpha}_\xx^j\}^\mathrm{T},\ \{\bm{\alpha}_\yy^j\}^\mathrm{T}\right]^\mathrm{T}\\
\bm{\alpha}_i^j=\left[\{\bm{\alpha}_{0,i}^j\}^\mathrm{T},\ \{\bm{\alpha}_{1,i}^j\}^\mathrm{T},\cdots,\ \{\bm{\alpha}_{2j,i}^j\}^\mathrm{T}\right]^\mathrm{T}\\
\bm{\alpha}_{n,i}^j=\left[a_{n,i,1}^j,\ a_{n,i,2}^j\ \ a_{n,i,3}^j \right]^\mathrm{T}\\
\end{array}
\end{equation}
where  $i\in\{\xx,\yy\}$ and $\{a_{n,x,k}^j,a_{n,y,k}^j\}_{k=1}^3$ are the state parameters in (\ref{eq:4.0.6}).

After including $\bar{\mathbf{u}}=\bar{\mathbf{u}}^*_j(t,\bm{\alpha}^j,\{\tau_n\}_{n=0}^{2j},\mathcal{C}^j)$ (defined above) in the optimization problem (\ref{eq:4.0.5}), the latter becomes (for a particular value of $j$):
\begin{equation}
\label{eq:4.0.7dd}
    \begin{array}{l}
\displaystyle \minimize_{{\bm{\alpha}}^j}\ \ \ \ \ \ \int_0^{t_f}\left\|\bar{\mathbf{u}}^*_j\left(t,\bm{\alpha}^j,\{\tau_n\}_{n=0}^{2j},\bm{c}^j\right)\right\|_2^2\mathrm{d}t
\end{array}
\end{equation}
where:
\begin{equation}
\label{eq:4.0.8}
\begin{array}{r}
\displaystyle\int_0^{t_f}\left\|\bar{\mathbf{u}}_j^*\left(t,\bm{\alpha}^j,\{\tau_n\}_{n=0}^{2j},\bm{c}^j\right)\right\|_2^2\mathrm{d}t=\\
\displaystyle\sum_{n=0}^{2j}\left(\left[
\begin{array}{c}
c^j_{n+1,\xx}\\
\bm{\alpha}_{n+1,\xx}^j
\end{array}
\right]
-\exp\left(A_\xx\tau_{n}\right) \left[
\begin{array}{c}
c^j_{n,\xx}\\
\bm{\alpha}_{n,\xx}^j
\end{array}
\right]\right)^\mathrm{T}\eu{W}_{\tau_{n}\xx}^{-1}\\
\left(\left[
\begin{array}{c}
c^j_{n+1,\xx}\\
\bm{\alpha}_{n+1,\xx}^j
\end{array}
\right]-\exp\left(A_\xx\tau_{n}\right) \left[
\begin{array}{c}
c^j_{n,\xx}\\
\bm{\alpha}_{n,\xx}^j
\end{array}
\right]\right)\\
+\displaystyle\sum_{n=0}^{2j}\left(
\left[
\begin{array}{c}
c^j_{n+1,\yy}\\
\bm{\alpha}_{n+1,\yy}^j
\end{array}
\right]
-\exp\left(A_\yy\tau_{n}\right) \left[
\begin{array}{c}
c^j_{n,\yy}\\
\bm{\alpha}_{n,\yy}^j
\end{array}
\right]\right)^\mathrm{T}
\eu{W}_{\tau_{n}\yy}^{-1}\\
\left(\left[
\begin{array}{c}
c^j_{n+1,\yy}\\
\bm{\alpha}_{n+1,\yy}^jt
\end{array}
\right]
-\exp\left(A_\yy\tau_{n}\right) 
\left[
\begin{array}{c}
c^j_{n,\yy}\\
\bm{\alpha}_{n,\yy}^j
\end{array}
\right]\right)
\end{array}
\end{equation}
To solve (\ref{eq:4.0.7}), we first note that we need $\bm{\alpha}_{1,\xx}^j=\bm{\alpha}_{1,\yy}^j=\bm{\alpha}_{2j+1,\xx}^j=\bm{\alpha}_{2j+1,\yy}^j=\bm{0}$ in order to satisfy the initial and final states of the drone required by  (\ref{eq:4.0.5}). The rest of the parameters  $\left\{\bm{\alpha}_{n,\xx},\bm{\alpha}_{n,\yy}\right\}_{n=1}^{2j-1}$ are optimized by solving (\ref{eq:4.0.7}). Note that the cost function is quadratic with respect to these parameters as the matrix $\eu{W}_{\tau_{n}x}$ is independent of the ${\bm \alpha}$'s. Therefore (\ref{eq:4.0.7}) is a convex optimization problem and has a unique solution. This solution can be obtained by first calculating analytically the gradient of (\ref{eq:4.0.8}) with respect to the state vectors $\bm{\alpha}_{1,\xx}^j=\bm{\alpha}_{1,\yy}^j=\bm{\alpha}_{2j+1,\xx}^j=\bm{\alpha}_{2j+1,\yy}^j=\bm{0}$, and then set this gradient to zero to solve the resulting matrix equations numerically. 

After having optimized $\bm{\alpha}^j$, we plug the optimized vectors into the concatenated control law $\bar{\mathbf{u}}_j^*\left(t,\bm{\alpha}^j,\{\tau_n\}_{n=0}^{2j},\mathcal{C}^j\right)$. We refer to the resulting control as $\bar{\mathbf{u}}_j^*\left(t,\{\tau_n\}_{n=0}^{2j},\mathcal{C}^j\right)$. This new control law $\bar{\mathbf{u}}^*_j\left(t,\{\tau_n\}_{n=0}^{2j},\mathcal{C}^j\right)$ is the solution to the optimization problem (\ref{eq:4.0.5}) (parameterized over $j$).

Now, we remind the reader that the optimization problem (\ref{eq:4.0.5}) was derived from the optimization problem (\ref{eq:3.0.7}) by fixing the set of points $\mathcal{C}^j$ as well as the time duration sequence $\{\tau_n\}_{n=0}^{2j}$ and using the approximation (\ref{eq:4.0.4a}) for the communications term in the cost function.

To continue with our bottom-up integration, we use the control law $\bar{\mathbf{u}}_j^*\left(t,\{\tau_n\}_{n=0}^{2j},\mathcal{C}^j\right)$ derived above and then we use the approximation (\ref{eq:4.0.4a}) of the cost function in (\ref{eq:3.0.7}). By doing so, the cost function (\ref{eq:3.0.6}) becomes:  
\begin{eqnarray}
\label{eq:4.0.8c}
\mathcal{J}'\left(\bar{\mathbf{u}},\lambda,t_f\right)&\approx&
\lambda \int_0^{t_f}\left\|\bar{\mathbf{u}}^*\left(t,\{\tau_n\}_{n=0}^{2j},\mathcal{C}^j\right)\right\|^2\mathrm{d}t\nonumber\\
&+&(1-\lambda)w\left(\mathcal{B}\left(\{\tau_n\}_{n=0}^{2j}\right)\right)\nonumber\\
&\triangleq&\mathcal{J}''\left(\{\tau_n\}_{n=0}^{2j},\mathcal{C}^j\right)
\end{eqnarray}
where $\mathcal{B}\left(\{\tau_n\}_{n=0}^{2j}\right)$ is defined in (\ref{eq:4.0.4a}) and the optimization problem (\ref{eq:3.0.7}) becomes: 
\begin{equation}
\label{eq:4.0.9}
    \begin{array}{l}
\displaystyle \minimize_{\{\tau_n\}_{n=0}^{2j}, \{\beta_n\}_{n=1}^{2j}}\ \ \ \ \ \ 
\mathcal{J}''\left(\{\tau_n\}_{n=0}^{2j},\mathcal{C}^j\right)\\
{\rm s.t.}\\
\mathcal{C}^j=\{\bm{c}^j_0, \bm{c}^j_1,\cdots,\bm{c}^j_{2j+1}\},\\
\bm{c}^j_n=r_n[\cos(\beta_n)\ \ \sin(\beta_n)],
\end{array}
\end{equation}
In the last line of the constraints, we have the points $\bm{c}^j_n$ in polar coordinates and due to their definition and of the regions $\{\mathcal{A}_k\}_{k=0}^j$ in (\ref{eq:3.0.8})) the radii $\{r_n\}_{n=1}^{2j}$ are known and their values can be obtained numerically from (\ref{eq:3.0.8}). So the locations of the points $\{\bm{c}^j_n\}_{n=1}^{2j}$ are fully determined by the angles $\{\beta_n\}_{n=1}^{2j}$.

Now, since the domain of the variables $\beta_n$ and $\tau_n$ is bounded, we can solve (\ref{eq:4.0.9}) using simulated annealing (SA) \cite{b2}. Let us call $\bar{\mathbf{u}}^*_j\left(t\right)$ the control law produced by solving (\ref{eq:4.0.9}). 

Since $\bar{\mathbf{u}}^*_j\left(t\right)$ is still parameterized over the number $j$, the final step in obtaining a suboptimal solution for (\ref{eq:3.0.7}) is to solve (\ref{eq:4.0.9}) for $j=0,1,\cdots,J$, compare the values of the cost function obtained with each value of $j$, and select the control law  $\bar{\mathbf{u}}^*_j\left(t\right)$ that produces the minimum value of the cost function in (\ref{eq:4.0.9}). It is worth pointing out that, as mentioned above, the solution obtained by this method is a sub-optimal solution for (\ref{eq:3.0.7}) mainly due to the approximation in (\ref{eq:4.0.4a}).

\section{Obstacles Avoidance}
\label{sec:Obstacles}

The solution presented in section \ref{sec:Solution} does not consider obstacles so in this section we show how to slightly modify our method to take them into account while increasing the computational complexity just slightly. To achieve this, let us start by considering a single circular obstacle of center $\mathbf{q}_O$ and radius $r_o$.  As mentioned in section \ref{sec:Solution} the robot will pass through points $\{\mathbf{c}_n^j\}_{n=0}^j$ in ascending order. Then, let us assume that the robot's path can be approximated by a piecewise linear path passing through the same points $\{\mathbf{c}_n^j\}_{n=0}^j$. If we do so then in order to allow the robot to avoid the obstacle it suffices to ensure that the distance between the center of the obstacle (i.e., $\mathbf{q}_O$) and each linear segment composing the piece-wise linear path mentioned above is greater than $r_o$.

This can be easily done by adding a penalization term to the cost function (\ref{eq:4.0.9}) inspired from the concept of artificial potential fields \cite{b11} which is used for obstacle avoidance problems. The penalization term that we propose is:
\begin{equation}
\label{eq:5.0.1}
P_{obs}=K_1\sum_{n=0}^{j-1}\exp\left( K_2\frac{\|\mathbf{q}_o-\mathbf{a}^*(n,n+1)\|-r_o}{r_o} \right)
\end{equation}
where $K_1>0$, $K_0>0$ are design parameters; $\mathbf{a}^*(n,n+1)$ is the closest point to the obstacle center $\mathbf{q}_o$ belonging to the linear segment $\theta\mathbf{c}_n+(1-\theta)\mathbf{c}_{n+1}$ with $\theta\in[0,1]$. Using linear algebra it is possible to show that 
\begin{equation}
\label{eq:5.0.2}
\mathbf{a}^*(n,n+1)=\theta^*\mathbf{c}_n+(1-\theta^*)\mathbf{c}_{n+1}
\end{equation}
where:
\begin{equation}
\label{eq:5.0.3}
\theta^*(n,n+1)=\max\left(\min\left(\frac{(\mathbf{c}_{n}-\mathbf{c}_{n+1})^\mathrm{T}(\mathbf{q}_o-\mathbf{c}_{n+1})}{\|\mathbf{c}_{n+1}-\mathbf{c}_{n}\|^2},1\right),0\right)
\end{equation}
To take into consideration more obstacles it suffices to add similar penalization terms per obstacle and in order to consider obstacles with more complex shapes we could represent them as the union of several small circular obstacles.

Finally, we have to mention that the purpose of this section is to show that it is possible to integrate obstacle avoidance mechanisms into our trajectory planning method. This means that other and more sophisticated obstacle avoidance techniques could also be adapted to our method.

\section{Simulations}
\label{sec:Simulations}
In this section we present some simulation results to gain more insight into the trajectories produced by our method for solving the {\it minimum energy-maximum data} problem described in section \ref{sec:ProblemStatement}. So, the function $w(\cdot)$ in (\ref{eq:3.0.6}) takes the form given by (\ref{eq:2.0.3}).

For the drone we select the parameters presented in \cite{r1} which represent a real drone. For the channel model, we select $\alpha=2$ and 1 dB for the variance of the shadowing\footnote{These are realistic values close to the ones used in the literature for similar applications \cite{r22}.}.

Now, for the communications system, we assume that during each duplexing period the drone transmits $N_s$ symbols using the modulation schemes 4-QAM, 16-QAM and 64-QAM (see \cite{b9}) which results in the following bit rates 2$R_s$, 4$R_s$ and 6$R_s$  where $R_s$ is the symbol transmission rate; we also select the thresholds $\gamma_j$ such that the bit error probability is always lower than $10^{-3}$; finally, we set $10\log_{10}(P/\sigma^2)=40$ dB. 

\begin{figure}[h]
\vspace{-4mm}
\centerline{\includegraphics[clip, trim ={5mm 1mm 5mm 1mm}, height=4cm,width=8cm]{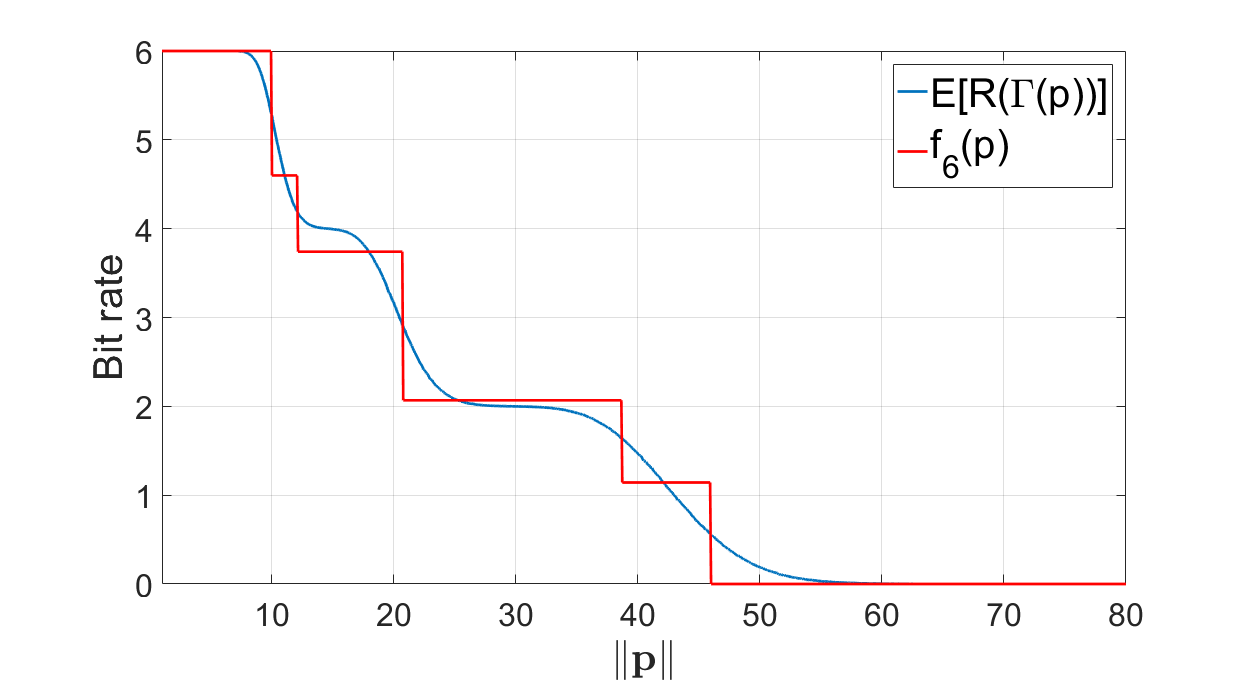}}
\vspace{-3mm}
\caption{Comparison of $\mathbb{E}[R(\Gamma(\mathbf{p}))]$ with its quantized version $f_Q(\mathbf{p})$ with $Q=6$.}
\label{Figure0}
\end{figure}

Regarding the quantized function $f_Q(\mathbf{p}(kT))$ in (\ref{eq:3.0.6}) with $Q=6$ quantization levels (see Fig. \ref{Figure0}) which results into six different regions $\{\mathcal{A}_k\}_{j=0}^5$ (see \ref{eq:3.0.8}) whose boundaries are marked by gray circles in Fig. \ref{Figure1}. The quantization shown $f_6(\cdot)$ according to the method described in (\ref{Appendix:C}) The AP is located at the origin, shown in black in Fig. \ref{Figure1}. The starting and goal points are set to $[75\ \ 0]$ and $80[\cos\left(\frac{5\pi}{9}\right)\ \ \sin\left(\frac{5\pi}{9}\right)]$ respectively (see Fig. \ref{Figure1} ) while $t_f=100$s. The simulations are then performed using MATLAB.

In Figs. \ref{Figure1}-\ref{Figure3} we show the paths of the optimized trajectories for various values of $\lambda$: $\lambda=1$ (black); $\lambda=0.98$ (blue);  $\lambda=0.8$ (red);  $\lambda=0.5$ (green);  $\lambda=0.2$ (yellow);  $\lambda=0.1$ (magenta). The case of $\lambda=1$ results in the drone moving on a straight line from the starting point to the goal point in a time $t_f$ using minimum energy. We refer to such trajectory as the minimum energy trajectory (MET) and will serve as comparison.

\begin{figure}[h]
\vspace{-5mm}
\centerline{\includegraphics[clip, trim ={2mm 1mm 2mm 1mm}, height=4.5cm,width=8cm]{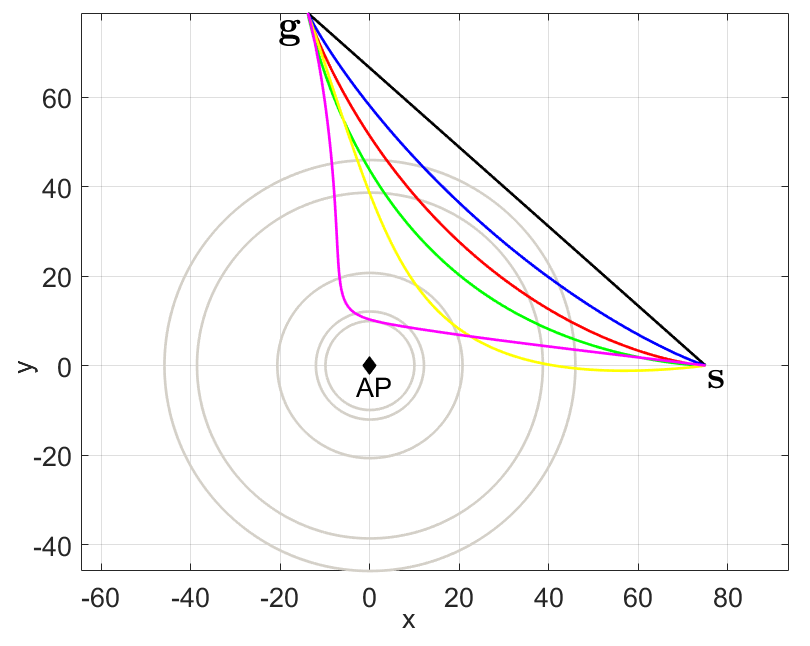}}
\vspace{-5mm}
\caption{Optimum paths for different values of $\lambda$.}
\label{Figure1}
\end{figure}
\begin{figure}[h]
\centerline{\includegraphics[clip, trim ={4mm 1mm 2mm 1mm}, height=4.5cm,width=8cm]{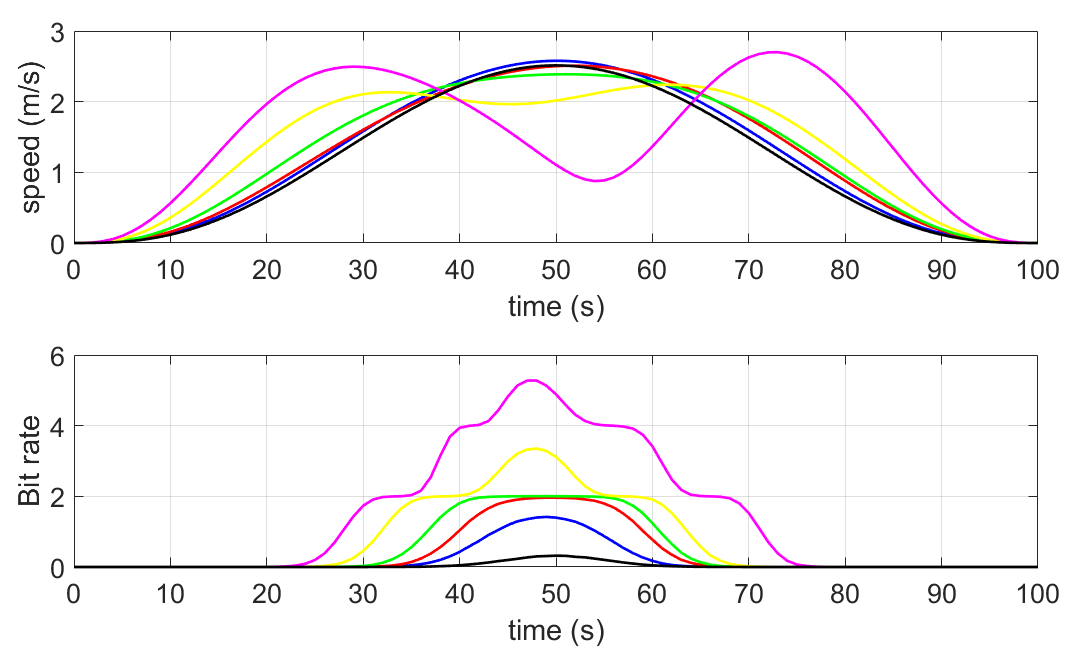}}
\vspace{-3mm}
\caption{Speed (top) and transmission (bottom) profiles  for different values of $\lambda$.}
\label{Figure2}
\end{figure}

When the value of $\lambda$ is high (close to 1) the path of the optimum trajectory deviates only slightly from the MET's  path, see Fig. \ref{Figure1}. This results in a slightly curved path attracted to the location of the APP. Meanwhile, the speed profile remains very similar to one of the MET, see Fig. \ref{Figure2} (top), but the profile of the bit rate transmission is significantly different, see Fig. \ref{Figure2} (bottom).   

Note that as the value of $\lambda$ decreases the shape of the speed profile transforms from having a single maximum to having two maxima, see Fig. \ref{Figure2} (top). This is due to the fact that when $\lambda$ is low the drone wants to maximize the amount of data transmitted. This is achieved by going as fast as possible to the regions with higher bit rate, then spending as much time there in order to take advantage of the higher bit rate. As a consequence this leaves the drone little time to reach the goal point $\mathbf{g}$ in time and hence it has to accelerate in order to reach it in time. This is clearly translated into the speed profile (see Fig. \ref{Figure2} (top)) by producing two local maxima separated by a local minima. In addition this is also reflected in the shape of the path which tends to be the concatenation of two paths, see Fig. \ref{Figure1},: one (almost) straight path connecting directly the starting point $\mathbf{s}$ to an area with high bit rate near the AP and second one (also almost straight) connecting with the goal point $\mathbf{g}$.

In table. \ref{tab:1} we observe: in the first column the ratio of the energy consumed by the robot by executing the optimum trajectory over the amount of energy spent by the robot while following the MET;  in the second column the ratio of the average data transmission (measured by simulations) divided by average data transmitted by the drone during the MET; the third column presents the average data transmitted during the optimized trajectory calculated using the approximation (\ref{eq:4.0.4a}) which was used during the optimization procedure; in the fourth column we present the same average data transmitted during the trajectory but this time measured by simulations. The values presented on the third and fourth columns are normalized over $R_sT_{tx}/T$. 

By observing the first two columns of table \ref{tab:1} we observe that the energy consumption is an increasing function of $\lambda$ while the number of bits transmitted is a decreasing function of $\lambda$. Thus confirming that the parameter $\lambda$ controls the compromise between minimizing the energy spent in motion and maximizing the amount of data transmitted. It is worth noting that by setting $\lambda$ close to 1 we can significantly increase the number of bits transmitted with a relatively small increase in the energy consumption (see $\lambda=0.95$).

From table \ref{tab:1} we also note the expected number of bits transmitted calculated using the approximation (\ref{eq:4.0.4a}) and the expected number of bits transmitted measured by simulations are quite similar. Hence, this validates the approximation used during the optimization  (\ref{eq:4.0.4a}) used in the approximation as well as the utilization of the quantized function $f_Q(\mathbf{p})$ to approximate average bit rate.
\begin{table}[t]
\global\long\def\arraystretch{1.3}
 \caption{Performance of optimum trajectories}
\label{tab:1} \centering \global\long\def\arraystretch{1.2}
\begin{tabular}{|c|c|c|c|c|}
\hline
$\lambda$  & Energy   & Transmission  & Transmitted & Transmitted  \tabularnewline
  & ratio  &  ratio &  data (approx)&  data (measured) \tabularnewline
\hline
\hline  
0.98  &1.2587 &  4.9586& 20.58 &  20.60 \tabularnewline
\hline
0.95  & 1.7408  & 8.9156 &37.06& 37.09 \tabularnewline
\hline
0.9  &1.7532 &  9.0169&37.50 &  37.46 \tabularnewline
\hline
0.8  &1.8059  & 9.2104 &38.20 &  38.21\tabularnewline
\hline
0.7  &1.9189&   9.4478 &39.26 & 39.27\tabularnewline
\hline
0.6  &2.5208&   11.5680 &48.03 &  48.06 \tabularnewline
\hline
0.5  &2.6014&   11.7491 &48.73 &  48.76\tabularnewline
\hline
0.4  &6.1456&   17.6298 &73.25 &  73.26 \tabularnewline
\hline
0.3  &6.7555&   17.7892 &74.02 &  74.01 \tabularnewline
\hline
0.2  &6.7808&   17.9796 &74.66 &  74.63\tabularnewline
\hline
0.1  &30.2683&  34.1361 &141.74 &  141.70\tabularnewline
\hline
\end{tabular}
\end{table}

Now, we show obstacle avoidance capability of our method. To do this we add a circular obstacle of radius $r_o=5$ m at the point $[60\ 0]$ (represented as a red circle in Fig. \ref{Figure3}) which lies into the path of the optimum trajectory with $\lambda=0.6$  (green) which does not takes into account the obstacle. Then, we optimize the trajectory, with $\lambda=0.6$, while taking into account the obstacle using the method described in section \ref{sec:Obstacles} with coefficients $K_1=1000$ and $K_2=100$. The corresponding path is shown in blue in Fig. \ref{Figure3}. We observe from the zoomed portion in Fig. \ref{Figure3} that indeed the new optimum path (shown in blue) is able to successfully avoid the obstacle  (shown in red) and hence shows that the trajectory optimization method proposed in this article is also able to consider obstacles.

\begin{figure}[h]
\centerline{\includegraphics[clip, trim ={2mm 1mm 2mm 1mm}, height=5cm,width=8cm]{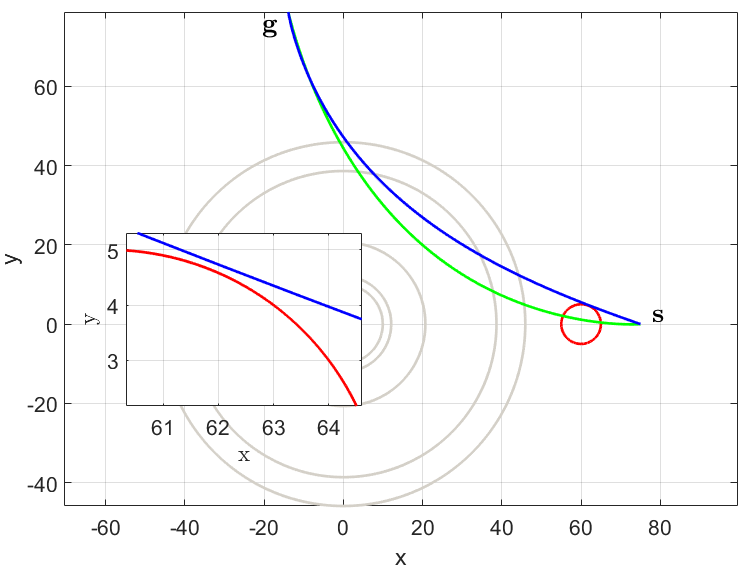}}
\caption{Paths corresponding to the optimum trajectories ($\lambda=0.6$) considering the obstacle (blue) and without considering the obstacle (green). The magnified image shows the vicinity of the obstacle.}
\label{Figure3}
\end{figure}

Finally, as mentioned in the introduction, many adaptive trajectory planners that take into account communications and energy aspects require a predetermined path to follow and modify according to channel measurements obtained while following the path. Therefore our technique could be used to provide such a path and therefore complement such algorithms.

\section{Conclusion}
\label{sec:Conclusions}
We have developed a technique to optimize a predetermined trajectory (path and velocity profile) that takes a robot from a certain initial point to a goal point in finite time while taking into account communications and energy constraints whose relative importance is determined by the design parameter $\lambda$. We were able to develop such a technique while taking into account the full dynamic model of the robot as well as a realistic model for the wireless channel, and requiring only a limited knowledge of the wireless channel, consisting only of its first order statistics. Results show that using our optimum trajectories and by selecting an appropriate value for $\lambda$, it is possible to significantly increase the number of bits transmitted during the trajectory. Our predetermined trajectory can be used jointly with other adaptive trajectory planners which require a predetermined path. Future work will focus on developing an experimental prototype to test the method developed in this paper in real environments.
\section*{Acknowledgements}
This work is partly funded by USAID under the grant agreement number 2000007744,  and the CNRS associated laboratory DATANET.
\begin{appendices}
\section{}
\label{Appendix:A}
In this appendix we introduce a new linearization method which we apply to the quadrotor model described by equations (\ref{eq:QR_model})-(\ref{eq:Inp_vec}). Let us first define the following state vectors:
\begin{equation}
\label{eq:StVar}
\begin{array}{l}
{x}_{\xx} \triangleq
\left[\begin{array}{cccc}
{\xx} & \derT{\xx}{t} & {\theta} & \derT{\theta}{t}
\end{array}\right]^{\mathrm{T}},
\ \
{x}_{\zz} \triangleq
\left[\begin{array}{cc}
{\zz} & \derT{\zz}{t}
\end{array}\right]^{\mathrm{T}},
\\
{x}_{\yy} \triangleq
\left[\begin{array}{cccc}
{\yy} & \derT{\yy}{t} & {\phi} & \derT{\phi}{t}
\end{array}\right]^{\mathrm{T}},
\ \
{x}_{\psi} \triangleq
\left[\begin{array}{cc}
{\psi} & \derT{\psi}{t}
\end{array}\right]^{\mathrm{T}},
\end{array}
\end{equation}
and then apply the following state feedback:
\begin{equation}
\label{eq:PreStFeed}
\left[\begin{array}{c}
\mathrm{c}(\phi)\mathrm{c}(\theta){u}_{\zz}/{m} \\
{{u}_{\psi}}/{I_\zz}
\end{array}\right] =
\left[\begin{array}{c}
F_{\zz}\,{x}_{\zz} \\
F_{\psi}\,{x}_{\psi}
\end{array}\right] +
\left[\begin{array}{c}
{g}
\\
0
\end{array}\right]
,
\end{equation}
\begin{equation}
\label{eq:DefStFeed}
F_{\zz} \triangleq
\left[\begin{array}{cc}
-a_{\zz_2} & -a_{\zz_1}
\end{array}\right],
\ \
F_{\psi} \triangleq
\left[\begin{array}{cc}
-a_{\psi_2} & -a_{\psi_1}
\end{array}\right],
\end{equation}
such that the polynomials $\pi_{\zz}(\lp) = {\lp}^{2} + a_{\zz_1}{\lp} + a_{\zz_2}$
and $\pi_{\psi}(\lp) = {\lp}^{2} + a_{\psi_1}{\lp} + a_{\psi_2}$ are Hurwitz, i.e., all their roots have negative real parts. By applying such a feedback the model (\ref{eq:QR_model})-(\ref{eq:QR_model2}) becomes:
\begin{equation}
\label{eq:EcEdo_syst}
\begin{array}{l}
\derT{{x}_{\psi}}{t} = {A}_{\psi}\,{x}_{\psi},
\ \
\derT{{x}_{\zz}}{t} = {A}_{\zz}\,{x}_{\zz} ,
\\
\derT{{x}_{\xx}}{t} = {A}_{\xx}\,{x}_{\xx} + {B}_{\xx}\,{u}_{\xx} + {S}_{\xx}\,{q}_{\xx,\theta}(x),
\ \
{y}_{\xx} = {C}_{\xx}\,{x}_{\xx},
\\
\derT{{x}_{\yy}}{t} = {A}_{\yy}\,{x}_{\yy} + {B}_{\yy}\,{u}_{\yy} + {S}_{\yy}\,{q}_{\yy,\phi}(x),
\ \
{y}_{\yy} = {C}_{\yy}\,{x}_{\yy},
\end{array}
\end{equation}
where:
\begin{equation}
\label{eq:EcEdo_syst_mat2}
\begin{array}{c}
{A}_{\zz} =
\left[\begin{array}{cc}
0 & 1 \\
-a_{\zz_2} & -a_{\zz_1}
\end{array}\right]
, \
{A}_{\psi} =
\left[\begin{array}{cc}
0 & 1 \\
-a_{\psi_2} & -a_{\psi_1}
\end{array}\right]
,
\\ \noalign{\smallskip}
{A}_{\xx} =
\left[\begin{array}{cccc}
0 & 1 & 0 & 0 \\
0 & 0 & g & 0 \\
0 & 0 & 0 & 1 \\
0 & 0 & 0 & 0
\end{array}\right]
,
\ \
{A}_{\yy} =
\left[\begin{array}{ccrc}
0 & 1 & 0 & 0 \\
0 & 0 & -g & 0 \\
0 & 0 & 0 & 1 \\
0 & 0 & 0 & 0
\end{array}\right]
,
\end{array}
\end{equation}

\begin{equation}
\label{eq:EcEdo_syst_mat3}
\begin{array}{c}
{B}_{\xx} = {B}_{\yy} =
\left[\begin{array}{c}
0 \\ 0 \\ 0 \\ {\ell}/{I}
\end{array}\right]
, \
{C}_{\xx} = {C}_{\yy} =
\left[\begin{array}{c}
1 \\ 0 \\ 0 \\ 0
\end{array}\right]^{T}
\\ \noalign{\smallskip}
{S}_{\xx} =
\left[\begin{array}{cccc}
0 & g & 0 & 0\\ 
0 & 0 & 0 & 1\\ 
\end{array}\right]^\mathrm{T}, \
{S}_{\yy} =
\left[\begin{array}{rccc}
0 & -g & 0 & 0\\ 
0 & 0 & 0 & 1\\ 
\end{array}\right]^\mathrm{T}
;
\end{array}
\end{equation}
${q}_{\xx,\theta}(x)$ $=$
$\left[\begin{array}{cc}
{q}_{\xx}(x) & {q}_{\theta}(x)
\end{array}\right]^{\mathrm{T}}$ and
${q}_{\yy,\phi}(x)$ $=$
$\left[\begin{array}{cc}
{q}_{\yy}(x) & {q}_{\phi}(x)
\end{array}\right]^{\mathrm{T}}$
are uncertainty vectors defined as follows:
\begin{equation}
\label{eq:EDO_syst-q}
\begin{array}{rcl}
q_\xx &\triangleq&
\left(
\frac{\mathrm{s}(\phi)\mathrm{s}(\psi)}
{\mathrm{c}(\phi)\mathrm{c}(\theta)}
+
\frac{\mathrm{s}(\theta)\mathrm{c}(\psi)}
{\mathrm{c}(\theta)}
\right)\left(
1 -
\frac{a_{\zz_2}}{g}\der{\zz}{t} - \frac{a_{\zz_1}}{g}{\zz}
\right) - \theta
,
\\ \noalign{\smallskip}
q_\yy &\triangleq&
\left(
\frac{\mathrm{s}(\theta)\mathrm{s}(\psi)}
{\mathrm{c}(\theta)}
-
\frac{\mathrm{s}(\phi)\mathrm{c}(\psi)}
{\mathrm{c}(\phi)\mathrm{c}(\theta)}
\right)\left(
-1 +
\frac{a_{\zz_2}}{g}\der{\zz}{t} + \frac{a_{\zz_1}}{g}{\zz}
\right) - \phi
,
\\ \noalign{\smallskip}
q_\phi &\triangleq&
-(I_\zz/I - 1)\,\derT{\theta}{t}\,\derT{\psi}{t} -
({J}/{I})\,\derT{\theta}{t}\,{q}_{w}
,
\\ \noalign{\smallskip}
q_\theta &\triangleq&
(I_\zz/I - 1)\,\derT{\phi}{t}\,\derT{\psi}{t} +
({J}/{I})\,\derT{\phi}{t}\,{q}_{w}
.
\end{array}
\end{equation}
Now, let us note that:\footnote{
\label{fn:q_w}
Note that,
${x}$ $=$ $0$ and
$\left[\begin{array}{cccc}
{u}_{\xx} & {u}_{\yy} & {u}_{\zz} & {u}_{\psi}
\end{array}\right]^{\mathrm{T}}$ $=$
$\left[\begin{array}{cccc}
0 & 0 & mg & 0
\end{array}\right]^{\mathrm{T}}$
imply (\textit{cf.} (\ref{eq:Inp_vec})):
$\omega_1^2$ $=$
$\omega_2^2$ $=$
$\omega_3^2$ $=$
$\omega_4^2$ $=$
$mg/(4\kappa_b)$, thus:
$q_w$ $=$ $0$.
}
${q}_{i}(0)$ $=$ $0$ and
$\left[\depT{{q}_{i}(x)}{{x}}\right]_{{x} = {0} \choose \bar{u}_{i} = {0}} = 0$,
$i$ $\in$ $\{{\xx},\,{\yy}\}$.
Hence ${q}_{i}(x)$,
$i$ $\in$ $\{{\xx},\,{\yy}\}$,
are indeed nonlinear perturbation signals and the
$\Sigma(A_{i}, B_{i}, C_{i})$,
$i$ $\in$ $\{{\xx},\,{\yy}\}$,
are linearized state descriptions of (\ref{eq:QR_model}) (after the state feedbacks (\ref{eq:PreStFeed})).

Following \cite{BoBlMaAzSa:17,BlBoSaMaAz:19}, we propose the change of variable\footnote{
At first, we solve the algebraic equation: $AM+BX=\Id{}$, and then, we set: 
$\zeta=x + M\left(S + \sum_{i=1}^{3}M^{i}S\psT{i}\right)q$. In our case we have: $
M_{\xx} = \left[\begin{array}{cccc}
0&0&0&0\\
1&0&0&0\\
0&1/g&0&0\\
0&0&1&0
\end{array}\right]
, \, 
 M_{\yy} = \left[\begin{array}{cccc}
0&0&0&0\\
1&0&0&0\\
0&-1/g&0&0\\
0&0&1&0
\end{array}\right]
, \, 
X_{\xx} = X_{\yy} = 
\left[\begin{array}{cccc}
0 & 0 & 0 & I/\ell
\end{array}\right]^\mathrm{T}$.
}:
\begin{equation}
\label{eq:zeta_x}
\begin{array}{l}
{\zeta}_{\zz} = {x}_{\zz},
\ \
{\zeta}_{\psi} = {x}_{\psi},
\\ \noalign{\smallskip}
{\zeta}_{\xx} = {x}_{\xx} +
\left[\begin{array}{cccc}
0 & 0 & 1 & {\pT}
\end{array}\right]^{\mathrm{T}}{q_{\xx}}({x}),
\\ \noalign{\smallskip}
{\zeta}_{\yy} = {x}_{\yy} +
\left[\begin{array}{cccc}
0 & 0 & 1 & {\pT}
\end{array}\right]^{\mathrm{T}}{q_{\yy}}({x}),
\end{array}
\end{equation}
which we apply to the model (\ref{eq:EcEdo_syst}) which then becomes:
\begin{equation}
\label{eq:Rep_edo_lin-pert}
\begin{array}{l}
\derT{{\zeta}_{\zz}}{t} = {A}_{\zz}\,{\zeta}_{\zz},
\ \
\derT{{\zeta}_{\psi}}{t} = {A}_{\psi}\,{\zeta}_{\psi},
\\ \noalign{\smallskip}
\derT{{\zeta}_{\xx}}{t} = {A}_{\xx}\,{\zeta}_{\xx} +
{B}_{\xx}\left({u}_{\xx} + {q}_{*,\xx}({x})\right),
\ \
{y}_{\xx} = {C}_{\xx}\,{\zeta}_{\xx},
\\ \noalign{\smallskip}
\derT{{\zeta}_{\yy}}{t} = {A}_{\yy}\,{\zeta}_{\yy} +
{B}_{\yy}\left({u}_{\yy} + {q}_{*,\yy}({x})\right),
\ \
{y}_{\yy} = {C}_{\yy}\,{\zeta}_{\yy}.
\end{array}
\end{equation}
where ${q}_{*,\xx}({x})$ and ${q}_{*,\yy}({x})$ are nonlinear uncertainty signals defined as follows\footnote{
${q}_{*}=X\left(S + \sum_{i=1}^{3}M^{i}S\psT{i}\right)q$. 
}:

\begin{equation}
\label{eq:q*}
\begin{array}{c}
{q}_{*,\xx}({x}) \triangleq
({I}/{\ell})
\left(\ders{{q}_{\xx}(x)}{t}{2}+
{q}_{\theta}(x)
\right)
\\ \noalign{\smallskip}
{q}_{*,\yy}({x}) \triangleq
({I}/{\ell})
\left(
\ders{{q}_{\yy}(x)}{t}{2} +
{q}_{\phi}(x)
\right)
\end{array}
\end{equation}
This new linearization procedure is an alternative to the well known \emph{input-output exact linearization} \cite{isidori,khalil,slotine}. The advantage of this method is that the nonlinearity appears in the linear form, $\bar{u}_{i} + {q}_{*,i}({x})$, $i \in \{\xx,\,\yy\}$, instead of appearing in the affine form,
$\alpha_{i}(x) + \beta_{i}(x)\bar{u}_{i}$, $i \in \{\xx,\,\yy\}$. This allows an exact cancellation of the nonlinear uncertainty signals using: ${u}_{i} = \bar{u}_{i} - {q}_{*,i}({x})$, $i \in \{\xx,\,\yy\}$. 

Then, the linearised model for the drone results in:
\begin{equation}
\label{eq:Rep_edo_lin-pert2}
\begin{array}{l}
\derT{{\zeta}_{\zz}}{t} = {A}_{\zz}\,{\zeta}_{\zz},
\ \
\derT{{\zeta}_{\psi}}{t} = {A}_{\psi}\,{\zeta}_{\psi},
\\ \noalign{\smallskip}
\derT{{\zeta}_{\xx}}{t} = {A}_{\xx}\,{\zeta}_{\xx} +
{B}_{\xx}\bar{u}_{\xx},
\ \
{p}_{\xx} = {C}_{\xx}\,{\zeta}_{\xx},
\\ \noalign{\smallskip}
\derT{{\zeta}_{\yy}}{t} = {A}_{\yy}\,{\zeta}_{\yy} +
{B}_{\yy}\bar{u}_{\yy},
\ \
{p}_{\yy} = {C}_{\yy}\,{\zeta}_{\yy}.
\end{array}
\end{equation}
From (\ref{eq:Rep_edo_lin-pert2}) we observe that the altitude $\mathrm{z}$ and the yaw angle $\psi$ are stable since ${A}_{\zz}$ and ${A}_{\psi}$ are Hurwitz matrices. Hence the linearized model (\ref{eq:Rep_edo_lin-pert2}) allows to control the position of the quadrotor in the horizontal plane while maintaining a constant altitude and yaw angle.

\section{}
\label{Appendix:B}
Let us consider a system described by the state space description:
\begin{equation}
\label{eq:MNCI_StSpDesc}
\dot{\mathbf{x}} = A\mathbf{x} + B\mathbf{u},
\quad
\mathbf{y} = C\mathbf{x},
\end{equation}
where: $\mathbf{x} \in \Rset^{n}$, $\mathbf{u} \in \Rset^{m}$, $\mathbf{y} \in \Rset^{p}$,
with the initial condition: $\mathbf{x}(0) = 0$.

We assume that the pair $(A,\,B)$ is controllable, namely: $\rk{\mcal{C}_{(A,\,B)}} = n$.
\medskip

We are interested in solving the following problem:

\begin{problem}
\label{Pr:MNCI_1}
Given a finite time $T > 0$ and given a vector $\mathbf{x}_{_T} \in \Rset^{n}$, find a minimal norm control input such that: $\mathbf{x}(T) = \mathbf{x}_{_T}$.
\end{problem}

This problem can be reformulated as follows:

\begin{problem}
\label{Pr:MNCI_2}
\begin{equation}
\label{eq:MNCI_Crit}
\displaystyle \minimize_{\mathbf{u}}\ \ \ \ \overline{\mcal{J}}(\mathbf{u}) =
\int_{0}^{T}\|\mathbf{u}(t)\|^2\,\dif{t}.
\end{equation}
{\rm s.t.}
\begin{equation}
\label{eq:MNCI_Rest}
\int_{0}^{T}
\mcal{F}^{\mathrm{T}}(T-t)\mathbf{u}(t)\,\dif{t} = \mathbf{x}_{_T}.
\end{equation}
{\rm where} $\mcal{F}(t) =B^{\mathrm{T}}\Expm{A^{\mathrm{T}}t}$.
\end{problem}

This is a classical minimum norm problem of seeking the closest vector to the origin lying in a variety of finite codimension, in a Hilbert space, and it is solved with the help of the Projection Theorem.
Indeed, according to Theorem 2 of \cite{luenberger}, the control input $\mathbf{u}$ which solves Problem \ref{Pr:MNCI_2} has the form:
\begin{equation}
\label{eq:MNCI_Defu}
\mathbf{u}(t) = \mcal{F}(T-t)\,\bm{\beta},
\end{equation}
where $\bm{\beta}$ is a vector in $\Rset^{n}$ satisfying (\ref{eq:MNCI_Rest}), that is to say:
\begin{equation}
\label{eq:MNCI_Defalpha}
\eu{W}_{_T}\,\bm{\beta} = \mathbf{x}_{_T},
\end{equation}

\noindent
where:
\begin{equation}
\label{eq:MNCI_DefW}
\eu{W}_{_T} =
\int_{0}^{T}
\Expm{A(T-t)}BB^{\mathrm{T}}\Expm{A^{\mathrm{T}}(T-t)}\,\dif{t}.
\end{equation}
$\eu{W}_{_T}$ is a well known matrix in system theory, which is used to study the reachability properties of (\ref{eq:MNCI_StSpDesc}), having the property: $\Rss{A}{\im{B}}$ $=$ $\im{\eu{W}_{_T}}$.
Furthermore, the controllability of the pair $(A,\,B)$ is equivalent to the invertibility of $\eu{W}_{_T}$ (see for example Theorem 1.1 of \cite{Wh:85}).

\begin{lemma}
\label{Lm:OptJbarr}
The solution of Problem \ref{Pr:MNCI_2} is:
\begin{equation}
\label{eq:MNCI_uSol}
\mathbf{u}_{*}(t) =\mcal{F}(T-t)\,\eu{W}_{_T}^{-1}\,\mathbf{x}_{_T}, \qquad 0 < t < T,
\end{equation}
where the optimal value of (\ref{eq:MNCI_Crit}) is:
\begin{equation}
\label{eq:OptJbarr}
\overline{\mcal{J}}_{*} =\mathbf{x}_{_T}^{\mathrm{T}}\,\eu{W}_{_T}^{-1}\,\mathbf{x}_{_T}.
\end{equation}
Moreover, for a given $\mathbf{x}_{_T} \neq 0$:
\begin{equation}
\label{eq:OptJbarr_cotas}
\lambda_{_{Max}}^{-1}({\eu{W}_{_T}}) \leq 
\overline{\mcal{J}}_{*}\Big/\|\mathbf{x}_{_T} \|^2\leq
\lambda_{_{min}}^{-1}({\eu{W}_{_T}}).
\end{equation}

\end{lemma}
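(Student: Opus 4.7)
The plan is to follow the setup already laid out in the paragraphs preceding the lemma: the Projection Theorem has reduced Problem 2 to determining the vector $\bm{\beta}\in\Rset^{n}$ in the parametrization $\mathbf{u}(t)=\mcal{F}(T-t)\bm{\beta}$, so all that remains is (i) to solve for $\bm{\beta}$ in closed form, (ii) evaluate the resulting cost, and (iii) bound it via the spectrum of $\eu{W}_{_T}$.

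First I would substitute the parametrized input into the constraint (\ref{eq:MNCI_Rest}), obtaining
\begin{equation*}
\int_{0}^{T}\mcal{F}^{\mathrm{T}}(T-t)\mcal{F}(T-t)\,\dif{t}\,\bm{\beta} \;=\; \mathbf{x}_{_T}.
\end{equation*}
A direct computation using $\mcal{F}(t)=B^{\mathrm{T}}\Expm{A^{\mathrm{T}}t}$ shows the integral equals $\eu{W}_{_T}$, so $\eu{W}_{_T}\bm{\beta}=\mathbf{x}_{_T}$. Controllability of $(A,B)$ yields $\rk{\eu{W}_{_T}}=n$ (as stated just above the lemma), hence $\eu{W}_{_T}$ is invertible and $\bm{\beta}=\eu{W}_{_T}^{-1}\mathbf{x}_{_T}$. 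Plugging back gives (\ref{eq:MNCI_uSol}).

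Next I would compute the optimal cost by inserting $\mathbf{u}_{*}$ into $\overline{\mcal{J}}$:
\begin{equation*}
\overline{\mcal{J}}_{*}=\bm{\beta}^{\mathrm{T}}\!\!\int_{0}^{T}\!\!\mcal{F}^{\mathrm{T}}(T-t)\mcal{F}(T-t)\,\dif{t}\;\bm{\beta}=\bm{\beta}^{\mathrm{T}}\eu{W}_{_T}\bm{\beta}.
\end{equation*}
Using $\bm{\beta}=\eu{W}_{_T}^{-1}\mathbf{x}_{_T}$ and the symmetry of $\eu{W}_{_T}$ (visible from its definition (\ref{eq:MNCI_DefW})), this collapses to $\mathbf{x}_{_T}^{\mathrm{T}}\eu{W}_{_T}^{-1}\mathbf{x}_{_T}$, which is (\ref{eq:OptJbarr}).

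Finally, for the bounds (\ref{eq:OptJbarr_cotas}), I would note that $\eu{W}_{_T}$ is symmetric positive definite (reachability Gramian of a controllable pair), so $\eu{W}_{_T}^{-1}$ is likewise symmetric positive definite with eigenvalues equal to the reciprocals of those of $\eu{W}_{_T}$; the Rayleigh quotient inequality then sandwiches $\mathbf{x}_{_T}^{\mathrm{T}}\eu{W}_{_T}^{-1}\mathbf{x}_{_T}/\|\mathbf{x}_{_T}\|^2$ between $\lambda_{_{min}}(\eu{W}_{_T}^{-1})=\lambda_{_{Max}}^{-1}(\eu{W}_{_T})$ and $\lambda_{_{Max}}(\eu{W}_{_T}^{-1})=\lambda_{_{min}}^{-1}(\eu{W}_{_T})$. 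None of these steps is a genuine obstacle; the only point requiring a little care is the identification of the integral of $\mcal{F}^{\mathrm{T}}\mcal{F}$ with $\eu{W}_{_T}$, which is immediate once the definitions of $\mcal{F}$ and $\eu{W}_{_T}$ are written side by side.
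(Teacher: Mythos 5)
Your proposal is correct and follows essentially the same route as the paper's proof: both rely on the Projection Theorem parametrization $\mathbf{u}(t)=\mcal{F}(T-t)\bm{\beta}$ already set up before the lemma, identify $\int_{0}^{T}\mcal{F}^{\mathrm{T}}(T-t)\mcal{F}(T-t)\,\dif{t}$ with $\eu{W}_{_T}$ and invert it via controllability, substitute back to get $\overline{\mcal{J}}_{*}=\mathbf{x}_{_T}^{\mathrm{T}}\eu{W}_{_T}^{-1}\mathbf{x}_{_T}$ using symmetry and positive definiteness of the Gramian, and conclude the bounds by the Rayleigh inequality. Your write-up merely spells out the intermediate computations that the paper leaves implicit; no substantive difference.
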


\begin{proof}
(\ref{eq:MNCI_uSol}) follows from (\ref{eq:MNCI_Defu}), (\ref{eq:MNCI_Defalpha})
and the controllability of the pair $(A,\,B)$.

Substituting (\ref{eq:MNCI_uSol}) into (\ref{eq:MNCI_Crit}), we get (recall (\ref{eq:MNCI_DefW}) and the definition of $\mcal{F}(t)$): $\overline{\mcal{J}}_{*}$ $=$ $\mathbf{x}_{_T}^{\mathrm{T}}\,\left(\eu{W}_{_T}^{-1}\right)^{*}\,\mathbf{x}_{_T}$. Noting that $\eu{W}_{_T}$ is a Positive Definite Matrix (see (\ref{eq:MNCI_DefW})), we get (\ref{eq:OptJbarr}).

Then (\ref{eq:OptJbarr_cotas}) follows directly from the Rayleigh inequality (see for example \cite{stewart}).

\end{proof}
\section{}
\label{Appendix:C}
In this appendix we present the optimum quantization method used for $f_Q(\mathbf{p})$ (see (\ref{eq:3.0.6})) in the simulations section\footnote{Other optimization methods are also possible.}. The quantized function $f_Q(\mathbf{p})$ is defined as:
\begin{equation}
\label{eq.C.1}
f_Q(\mathbf{p})=R_j^Q, \ \ \ \forall\ \|\mathbf{p}\|\in(d_{j-1},d_{j}],
\end{equation}
where $j=1,\cdots,Q$, with $Q$ being the number of quantization levels; $d_{j-1}<d_{j}$; $R_Q^Q=\max_j\{R_j\}$, $d_Q=0$, $R_1^Q=0$ and $d_0=+\infty$ while $\{d_j\}_{j=1}^{Q-1}$ and $\{R_j^Q\}_{j=2}^{Q-1}$ are variables to be optimized in order to minimize the following error:
\begin{equation}
\label{eq.C.2}
E_Q=\int_{0}^{\|\mathbf{o}\|}\left(f_Q\left(\frac{\nu\mathbf{o}}{\|\mathbf{o}\|}\right)-\mathbb{E}\left[R\left({\Gamma}\left(\frac{\nu\mathbf{o}}{\|\mathbf{o}\|}\right)\right)\right]\right)^2d\nu
\end{equation}
where $\mathbf{o}=\mathbf{s}$ if $\|\mathbf{s}\|>\|\mathbf{g}\|$ and $\mathbf{o}=\mathbf{g}$ otherwise. Those variables can be optimized using numerical methods like simulated annealing \cite{b2}.
\end{appendices}

\end{document}